\newcommand{\bg}{\boldsymbol{g}}
\newcommand{\bd}{\boldsymbol{d}}
\newcommand{\bx}{\boldsymbol{x}}
\newcommand{\by}{\boldsymbol{y}}
\newcommand{\bv}{\boldsymbol{v}}
\newcommand{\bepsilon}{\boldsymbol{\epsilon}}
\newcommand{\field}[1]{\mathbb{#1}}
\newcommand{\R}{\field{R}}
\newcommand{\E}{\field{E}}
\newcommand{\storm}{\textsc{Storm}}
\title{Momentum-Based Variance Reduction in Non-Convex SGD}
\author{%
  Ashok Cutkosky \\
  Google Research\\
  Mountain View, CA, USA\\
  \texttt{ashok@cutkosky.com} \\
  \and
  Francesco Orabona \\
  Boston University \\
  Boston, MA, USA \\
  \texttt{francesco@orabona.com}
}
\begin{document}

\maketitle

\begin{abstract}
  Variance reduction has emerged in recent years as a strong competitor to stochastic
  gradient descent in non-convex problems, providing the first algorithms to improve
  upon the converge rate of stochastic gradient descent for finding first-order critical
  points. However, variance reduction techniques typically require carefully tuned learning
  rates and willingness to use excessively large ``mega-batches'' in order to achieve their improved
  results. We present a new algorithm, \storm, that does not require any batches
  and makes use of adaptive learning rates, enabling simpler implementation and less hyperparameter tuning. Our technique for removing the batches uses a variant of momentum to achieve variance reduction in non-convex optimization.
  On smooth losses $F$, \storm\ finds a point $\bx$ with $\E[\|\nabla F(\bx)\|]\le O(1/\sqrt{T}+\sigma^{1/3}/T^{1/3})$ in $T$ iterations with $\sigma^2$ variance in the gradients, matching the optimal rate and without requiring knowledge of $\sigma$.
\end{abstract}

\section{Introduction}

This paper addresses the classic stochastic optimization problem, in which we are given a function $F:\R^d\to \R$, and wish to find $\bx \in \R^d$ such that $F(\bx)$ is as small as possible. Unfortunately, our access to $F$ is limited to a stochastic function oracle: we can obtain sample functions $f(\cdot, \xi)$ where $\xi$ represents some sample variable (e.g. a minibatch index) such that $\E[f(\cdot, \xi)]=F(\cdot)$. Stochastic optimization problems are found throughout machine learning. For example, in supervised learning, $\bx$ represents the parameters of a model (say the weights of a neural network), $\xi$ represents an example, $f(\bx, \xi)$ represents the loss on an example, and $F$ represents the training loss of the model.

We do not assume convexity, so in general the problem of finding a true minimum of $F$ may be NP-hard. Hence, we relax the problem to finding a critical point of $F$ -- that is a point such that $\nabla F(\bx)=0$. Also, we assume access only to stochastic gradients evaluated on arbitrary points, rather than Hessians or other information.
In this setting, the standard algorithm is stochastic gradient descent (SGD). SGD produces a sequence of iterates $\bx_1,\dots,\bx_T$ using the recursion
\begin{equation}
\label{eqn:sgdupdate}
\bx_{t+1} = \bx_t - \eta_t \bg_t,
\end{equation}
where $\bg_t=\nabla f(\bx_t,\xi_t)$, $f(\cdot,\xi_1),\dots,f(\cdot,\xi_T)$ are i.i.d. samples from a distribution $D$, and $\eta_1,\dots\eta_T\in \R$ are a sequence of learning rates that must be carefully tuned to ensure good performance. Assuming the $\eta_t$ are selected properly, SGD guarantees that a randomly selected iterate $\bx_t$ satisfies $\E[\|\nabla F(\bx_t)\|]\le O(1/T^{1/4})$~\citep{Ghadimi13}.

Recently, \emph{variance reduction} has emerged as an improved technique for finding critical points in non-convex optimization problems. Stochastic variance-reduced gradient (SVRG) algorithms also produce iterates $x_1,\dots,x_T$ according to the update formula (\ref{eqn:sgdupdate}), but now $\bg_t$ is a \emph{variance reduced} estimate of $\nabla F(\bx_t)$. Over the last few years, SVRG algorithms have improved the convergence rate to critical points of non-convex SGD from $O(1/T^{1/4})$ to $O(1/T^{3/10})$~\citep{Allen-ZhuH16, ReddiHSPS16} to $O(1/T^{1/3})$~\citep{FangLLZ18, ZhouXG18}. Despite this improvement, SVRG has not seen as much success in practice in non-convex machine learning problems \citep{DefazioB18}. Many reasons may contribute to this phenomenon, but two potential issues we address here are SVRG's use of \emph{non-adaptive learning rates} and reliance on \emph{giant batch sizes} to construct variance reduced gradients through the use of low-noise gradients calculated at a ``checkpoint''. In particular, for non-convex losses SVRG analyses typically involve carefully selecting learning rates, the number of samples to construct the gradient on the checkpoint points, and the frequency of update of the checkpoint points. The optimal settings balance various unknown problem parameters exactly in order to obtain improved performance, making it especially important, and especially difficult, to tune them.

In this paper, we address both of these issues. We present a new algorithm called STOchastic Recursive Momentum (\storm) that achieves variance reduction through the use of a variant of the momentum term, similar to the popular RMSProp or Adam momentum heuristics~\citep{TielemanH12,KingmaB15}. Hence, our algorithm does not require a gigantic batch to compute checkpoint gradients -- in fact, our algorithm does not require any batches at all because it \emph{never needs to compute a checkpoint gradient}.
\storm\ achieves the \emph{optimal convergence rate} of $O(1/T^{1/3})$~\citep{ArjevaniCDFSW19}, and it uses an \emph{adaptive learning rate} schedule that will automatically adjust to the variance values of $\nabla f(\bx_t,\xi_t)$.
Overall, we consider our algorithm a significant qualitative departure from the usual paradigm for variance reduction, and we hope our analysis may provide insight into the value of momentum in non-convex optimization.

The rest of the paper is organized as follows. The next section discusses the related work on variance reduction and adaptive learning rates in non-convex SGD. Section~\ref{sec:setting} formally introduces our notation and assumptions. We present our basic update rule and its connection to SGD with momentum in Section~\ref{sec:momentum}, and our algorithm in Section~\ref{sec:storm}. Finally, we present some empirical results in Section~\ref{sec:exp} and concludes with a discussion in Section~\ref{sec:conc}.

\section{Related Work}
\label{sec:rel}
Variance-reduction methods were proposed independently by three groups at the same conference: \citet{JohnsonZ13}, \citet{ZhangMJ13,MahdaviZJ13}, and \citet{WangCSX13}. The first application of variance-reduction method to non-convex SGD is due to~\citet{Allen-ZhuH16}. Using variance reduction methods, \citet{FangLLZ18, ZhouXG18} have obtained much better convergence rates for critical points in non-convex SGD. These methods are very different from our approach because they require the calculation of gradients at checkpoints. 
In fact, in order to compute the variance reduced gradient estimates $\bg_t$, the algorithm must periodically stop producing iterates $\bx_t$ and instead generate a very large ``mega-batch'' of samples $\xi_1,\dots,\xi_N$ which is used to compute a checkpoint gradient $\frac{1}{N}\sum_{i=1}^N \nabla f(\bv,\xi_i)$ for an appropriate checkpoint point $\bv$. Depending on the algorithm, $N$ may be as large as $O(T)$, and typically no smaller than $O(T^{2/3})$.
The only exceptions we are aware of are SARAH~\citep{NguyenLST17, NguyenLST17b} and iSARAH~\citep{NguyenST18}. However, their guarantees do not improve over the ones of plain SGD, and they still require at least one checkpoint gradient. Independently and simultaneously with this work, \citep{tran2019hybrid} have proposed a new algorithm that does improve over SGD to match our same convergence rate, although it does still require one checkpoint gradient. Interestingly, their update formula is very similar to ours, although the analysis is rather different.
We are not aware of prior works for non-convex optimization with reduced variance methods that completely avoid using giant batches.

On the other hand, \emph{adaptive learning-rate} schemes, that choose the values $\eta_t$ in some data-dependent way so as to reduce the need for tuning the values of $\eta_t$ manually, have been introduced by \citet{DuchiHS11} and popularized by the heuristic methods like RMSProp and Adam~\citep{TielemanH12,KingmaB15}. In the non-convex setting, adaptive learning rates can be shown to improve the convergence rate of SGD to $O(1/\sqrt{T} + (\sigma^2/T)^{1/4})$, where $\sigma^2$ is a bound on the variance of $\nabla f(\bx_t)$~\citep{LiO19, WardWB18, ReddiKK18}. Hence, these adaptive algorithms obtain much better convergence guarantees when the problem is ``easy'', and have become extremely popular in practice. In contrast, the only variance-reduced algorithm we are aware of that uses adaptive learning rates is \citep{CutkoskyB18}, but their techniques apply only to convex losses.


\section{Notation and Assumptions}
\label{sec:setting}
In the following, we will write vectors with bold letters and we will denote the inner product between vectors $\boldsymbol{a}$ and $\boldsymbol{b}$ by $\boldsymbol{a}\cdot\boldsymbol{b}$.

Throughout the paper we will make the following assumptions.
We assume access to a stream of independent random variables $\xi_1,\dots,\xi_T\in \Xi$ and a function $f$ such that for all $t$ and for all $x$, $\E[f(\bx, \xi_t)|\bx]= F(\bx)$. Note that we access two gradients on the same $\xi_t$ on two different points in each update, like in standard variance-reduced methods. In practice, $\xi_t$ may denote an i.i.d. training example, or an index into a training set while $f(\bx, \xi_t)$ indicates the loss on the training example using the model parameter $\bx$.
We assume there is some $\sigma^2$ that upper bounds the noise on gradients: $\E[\|\nabla f(\bx,\xi_t) -\nabla F(\bx)\|^2]\leq \sigma^2$.

We define $F^\star = \inf_{\bx} F(\bx)$ and we will assume that $F^\star>-\infty$.
We will also need some assumptions on the functions $f(\bx, \xi_t)$.
Define a differentiable function $f:\R^d\to \R$ to be $G$-Lipschitz iff $\|\nabla f(\bx)\|\le G$ for all $x$, and $f$ to be $L$-smooth iff $\|\nabla f(\bx) - \nabla f(\by)\|\le L\|\bx-\by\|$ for all $x$ and $y$. We assume that $f(\bx,\xi_t)$ is differentiable, and $L$-smooth as a function of $\bx$ with probability 1. We will also assume that $f(\bx, \xi_t)$ is $G$-Lipschitz for our adaptive analysis. We show in appendix \ref{sec:nolipschitz} that this assumption can be lifted at the expense of adaptivity to $\sigma$.

\section{Momentum and Variance Reduction}
\label{sec:momentum}
Before describing our algorithm in details, we briefly explore the connection between SGD with momentum and variance reduction.

The stochastic gradient descent with momentum algorithm is typically implemented as 
\begin{align*}
\bd_t &= (1-a) \bd_{t-1} + a \nabla f(\bx_t,\xi_t) \\
\bx_{t+1} &= \bx_t - \eta \bd_t,
\end{align*}
where $a$ is small, i.e. $a=0.1$.
In words, instead of using the current gradient $\nabla F(\bx_t)$ in the update of $\bx_t$, we use an exponential average of the past observed gradients.

While SGD with momentum and its variants have been successfully used in many machine learning applications~\citep{KingmaB15}, it is well known that the presence of noise in the stochastic gradients can nullify the theoretical gain of the momentum term~\citep[e.g.][]{YuanYS16}. As a result, it is unclear how and why using momentum can be better than plain SGD. Although recent works have proved that a variant of SGD with momentum improves the non-dominant terms in the convergence rate on convex stochastic least square problems~\citep{Dieuleveut17,JainKKNS18}, it is still unclear if the actual convergence rate can be improved. 

Here, we take a different route. Instead of showing that momentum in SGD works in the same way as in the noiseless case, i.e. giving accelerated rates, we show that \emph{a variant of momentum can provably reduce the variance of the gradients}.
In its simplest form, the variant we propose is:
\begin{align}
\bd_t &= (1-a) \bd_{t-1} + a \nabla f(\bx_{t},\xi_t) + (1-a)(\nabla f(\bx_t,\xi_t) - \nabla f(\bx_{t-1}, \xi_t)) \label{eq:storm_upd1}\\
\bx_{t+1} &= \bx_t - \eta \bd_t \label{eq:storm_upd2}~.
\end{align}
The only difference is the that we add the term $(1-a)(\nabla f(\bx_t,\xi_t) - \nabla f(\bx_{t-1}, \xi_t))$ to the update. As in standard variance-reduced methods, we use two gradients in each step. However, we do not need to use the gradient calculated at any checkpoint points.
Note that if $\bx_t \approx \bx_{t-1}$, then our update becomes approximately the momentum one. These two terms will be similar as long as the algorithm is actually converging to some point, and so we can expect the algorithm to behave exactly like the classic momentum SGD towards the end of the optimization process.

To understand why the above updates delivers a variance reduction, consider the ``error in $\bd_t$'' which we denote as $\bepsilon_t$:
\[
\bepsilon_t := \bd_t - \nabla F(\bx_t)~.
\]
This term measures the error we incur by using $\bd_t$ as update direction instead of the correct but unknown direction, $\nabla F(\bx_t)$. The equivalent term in SGD would be $\E[\|\nabla f(\bx_t,\xi_t)-\nabla F(\bx_t)\|^2]\leq \sigma^2$. So, if $\E[\|\bepsilon_t\|^2]$ decreases over time, we have realized a variance reduction effect. Our technical result that we use to show this decrease is provided in Lemma~\ref{lemma:epsilonrecursion}, but let us take a moment here to appreciate why this should be expected intuitively. Considering the update written in \eqref{eq:storm_upd1}, we can obtain a recursive expression for $\bepsilon_t$ by subtracting $\nabla F(\bx_t)$ from both sides:
\[
\bepsilon_t 
= (1-a) \bepsilon_{t-1} + a (\nabla f(\bx_{t},\xi_t)-\nabla F(\bx_t)) + (1-a)(\nabla f(\bx_t,\xi_t) - \nabla f(\bx_{t-1}, \xi_t) - (\nabla F(\bx_t) - \nabla F(\bx_{t-1})))~.
\]
Now, notice that there is good reason to expect the second and third terms of the RHS above to be small: we can control $a (\nabla f(\bx_{t},\xi_t)-\nabla F(\bx_t))$ simply by choosing small enough values $a$, and from smoothness we expect $(\nabla f(\bx_t,\xi_t) - \nabla f(\bx_{t-1}, \xi_t) - (\nabla F(\bx_t) - \nabla F(\bx_{t-1}))$ to be of the order of $O(\|\bx_t-\bx_{t-1}\|)=O(\eta \bd_{t-1})$. Therefore, by choosing small enough $\eta$ and $a$, we obtain $\|\bepsilon_t\| = (1-a)\|\bepsilon_{t-1}\| + Z$ where $Z$ is some small value. Thus, intuitively $\|\bepsilon_t\|$ will decrease until it reaches $Z/a$. This highlights a trade-off in setting $\eta$ and $a$ in order to decrease the numerator of $Z/a$ while keeping the denominator sufficiently large. Our central challenge is showing that it is possible to achieve a favorable trade-off in which $Z/a$ is very small, resulting in small error $\bepsilon_t$.

\section{\storm: STOchastic Recursive Momentum}
\label{sec:storm}

\begin{algorithm}[t]
\caption{\storm: STOchastic Recursive Momentum}
\label{alg:storm}
\begin{algorithmic}[1]
\STATE {\bfseries Input:} Parameters $k$, $w$, $c$, initial point $\bx_1$
\STATE Sample $\xi_1$
\STATE $G_1\gets \|\nabla f(\bx_1, \xi_1)\|$
\STATE $\bd_1\gets \nabla f(\bx_1, \xi_1)$
\STATE $\eta_0\gets \frac{k}{w^{1/3}}$
\FOR{$t=1$ {\bfseries to} $T$}
\STATE $\eta_t \gets \frac{k}{(w+ \sum_{i=1}^{t} G_t^2)^{1/3}}$
\STATE $\bx_{t+1}\gets \bx_t - \eta_t\bd_t$
\STATE $a_{t+1} \gets c\eta_t^2$
\STATE Sample $\xi_{t+1}$
\STATE $G_{t+1}\gets \|\nabla f(\bx_{t+1}, \xi_{t+1})\|$
\STATE $\bd_{t+1} \gets \nabla f(\bx_{t+1}, \xi_{t+1}) + (1-a_{t+1})(\bd_t - \nabla f(\bx_t, \xi_{t+1}))$
\ENDFOR
\STATE Choose $\hat \bx$ uniformly at random from $\bx_1,\dots,\bx_T$. (In practice, set $\hat\bx = \bx_T$).
\RETURN $\hat \bx$
\end{algorithmic}
\end{algorithm}

We now describe our stochastic optimization algorithm, which we call STOchastic Recursive Momentum (\storm). The pseudocode is in Algorithm~\ref{alg:storm}. As described in the previous section, its basic update is of the form of \eqref{eq:storm_upd1} and \eqref{eq:storm_upd2}. However, in order to achieve adaptivity to the noise in the gradients, both the stepsize and the momentum term will depend on the past gradients, \`a la AdaGrad~\citep{DuchiHS11}.

The convergence guarantee of \storm\ is presented in Theorem \ref{thm:storm} below.
\begin{restatable}{theorem}{thmstorm}
\label{thm:storm}
Under the assumptions in Section~\ref{sec:setting}, for any $b>0$, we write $k=\frac{b G^\frac{2}{3}}{L}$. Set $c=28L^2 + G^2/(7 L k^3) = L^2(28 + 1/(7 b^3))$ and $w=\max\left((4Lk)^3, 2G^2, \left(\tfrac{c k}{4 L}\right)^3\right) = G^2\max\left((4 b)^3, 2, (28b+\frac{1}{7b^2})^3/64\right)$. Then, \storm \ satisfies
\[
\E\left[\|\nabla F(\hat{\bx})\|\right]
= \E\left[\frac{1}{T}\sum_{t=1}^T \|\nabla F(\bx_t)\|\right]
\le \frac{w^{1/6}\sqrt{2M} + 2M^{3/4}}{\sqrt{T}} + \frac{2\sigma^{1/3}}{T^{1/3}},
\]
where $M=\frac{8}{k}(F(\bx_1) - F^\star) + \frac{w^{1/3}\sigma^2}{4 L^2k^2} + \frac{k^2 c^2}{2L^2}\ln(T+2)$.
\end{restatable}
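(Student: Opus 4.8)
The plan is to control the gradient-estimation error $\bepsilon_t = \bd_t - \nabla F(\bx_t)$ and trade it off against the function decrease through a single Lyapunov potential. First I would record the descent inequality coming from $L$-smoothness of $F$: since $\bx_{t+1} = \bx_t - \eta_t\bd_t$ and $\bd_t = \nabla F(\bx_t) + \bepsilon_t$, the identity $\nabla F(\bx_t)\cdot\bd_t = \tfrac12(\|\nabla F(\bx_t)\|^2 + \|\bd_t\|^2 - \|\bepsilon_t\|^2)$ gives
\[
F(\bx_{t+1}) \le F(\bx_t) - \frac{\eta_t}{2}\|\nabla F(\bx_t)\|^2 + \frac{\eta_t}{2}\|\bepsilon_t\|^2 - \left(\frac{\eta_t}{2} - \frac{L\eta_t^2}{2}\right)\|\bd_t\|^2.
\]
The condition $w\ge(4Lk)^3$ forces $\eta_t\le\eta_0\le\tfrac1{4L}$, so the $\|\bd_t\|^2$ coefficient is negative; the inequality thus converts a step of progress into $-\eta_t\|\nabla F(\bx_t)\|^2$ at the cost of the error term $\eta_t\|\bepsilon_t\|^2$.

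Next I would establish the variance-reduction recursion of Lemma~\ref{lemma:epsilonrecursion}. Subtracting $\nabla F(\bx_{t+1})$ from the update for $\bd_{t+1}$ yields $\bepsilon_{t+1} = (1-a_{t+1})\bepsilon_t + a_{t+1}\bS_{t+1} + (1-a_{t+1})\bU_{t+1}$, where $\bS_{t+1} = \nabla f(\bx_{t+1},\xi_{t+1}) - \nabla F(\bx_{t+1})$ and $\bU_{t+1} = (\nabla f(\bx_{t+1},\xi_{t+1}) - \nabla f(\bx_t,\xi_{t+1})) - (\nabla F(\bx_{t+1}) - \nabla F(\bx_t))$. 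Crucially, $\bepsilon_t$ is measurable given the history before $\xi_{t+1}$ while $\bS_{t+1}$ and $\bU_{t+1}$ have zero conditional mean, so the cross term vanishes in expectation; bounding $\E\|\bS_{t+1}\|^2\le\sigma^2$ and, by $L$-smoothness of $f(\cdot,\xi_{t+1})$, $\E\|\bU_{t+1}\|^2\le L^2\eta_t^2\|\bd_t\|^2$ gives
\[
\E_t\|\bepsilon_{t+1}\|^2 \le (1-a_{t+1})\|\bepsilon_t\|^2 + 2a_{t+1}^2\sigma^2 + 2L^2\eta_t^2\|\bd_t\|^2.
\]

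The heart of the argument is to combine the two inequalities through the potential $\Phi_t = F(\bx_t) + \frac{1}{32L^2\eta_{t-1}}\|\bepsilon_t\|^2$ and to show $\E_t[\Phi_{t+1}]\le\Phi_t - \Theta(\eta_t)\|\nabla F(\bx_t)\|^2 + O(\sigma^2\eta_t^3)$. Using $a_{t+1} = c\eta_t^2$, the coefficient of $\|\bepsilon_t\|^2$ becomes $\tfrac{\eta_t}2 + \tfrac1{32L^2}(\tfrac1{\eta_t} - \tfrac1{\eta_{t-1}}) - \tfrac{c\eta_t}{32L^2}$, and the coefficient of $\|\bd_t\|^2$ is negative. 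Here the precise choices of $c$ and $w$ enter: $c\ge 28L^2 > 16L^2$ absorbs the $\tfrac{\eta_t}2$ term, while the extra $G^2/(7Lk^3)$ in $c$, together with $w\ge 2G^2$ and $w\ge(\tfrac{ck}{4L})^3$, dominates the adaptive-stepsize increment $\tfrac1{\eta_t}-\tfrac1{\eta_{t-1}}$ (bounded via concavity of $x\mapsto x^{1/3}$ and $G_t\le G$), making the whole $\|\bepsilon_t\|^2$ coefficient nonpositive. Telescoping, using $\Phi_{T+1}\ge F^\star$, $\E\|\bepsilon_1\|^2\le\sigma^2$, and a logarithmic bound on the accumulated error term, then yields $\E[\sum_t\eta_t\|\nabla F(\bx_t)\|^2]\lesssim kM$, with the $\ln(T+2)$ in $M$ arising from this stepsize bookkeeping.

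Finally I would pass from the weighted square-sum to $\E[\tfrac1T\sum_t\|\nabla F(\bx_t)\|]$. Writing $\|\nabla F(\bx_t)\| = (\sqrt{\eta_t}\|\nabla F(\bx_t)\|)\cdot\eta_t^{-1/2}$ and applying Cauchy--Schwarz gives $\tfrac1T\sum_t\|\nabla F(\bx_t)\| \le \tfrac1T\sqrt{(\sum_t\eta_t\|\nabla F(\bx_t)\|^2)(\sum_t\eta_t^{-1})}$, where $\sum_t\eta_t^{-1} = \tfrac1k\sum_t V_t^{1/3}$ with $V_t = w + \sum_{i\le t}G_i^2$. The main obstacle is that $\eta_t$, through $V_t$, is itself correlated with the gradient norms being bounded --- the familiar self-referential difficulty of AdaGrad-style analyses. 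I expect to resolve it by the subadditivity bound $V_T^{1/3}\le w^{1/3} + (\sum_i G_i^2)^{1/3}$, which separates a deterministic $w^{1/6}/\sqrt T$ contribution from a noise contribution, and then by using $\E[G_t^2]\le 2\E\|\nabla F(\bx_t)\|^2 + 2\sigma^2$ together with Jensen's inequality (on the concave maps $x\mapsto\sqrt x$ and $x\mapsto x^{1/3}$) to close the loop; balancing these produces exactly the $\tfrac{w^{1/6}\sqrt{2M}+2M^{3/4}}{\sqrt T}$ and $\tfrac{2\sigma^{1/3}}{T^{1/3}}$ terms. This last step --- managing the randomness of the adaptive learning rate so that the $\sigma^2$ in the noise term downgrades to $\sigma^{1/3}$ in the rate --- is where I anticipate the bulk of the technical effort.
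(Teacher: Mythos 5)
Your architecture matches the paper's: the same potential $\Phi_t = F(\bx_t) + \frac{1}{32L^2\eta_{t-1}}\|\bepsilon_t\|^2$, the same zero-mean decomposition behind the error recursion of Lemma~\ref{lemma:epsilonrecursion}, and the same final self-bounding argument. Your descent step, which keeps the negative $-\left(\frac{\eta_t}{2}-\frac{L\eta_t^2}{2}\right)\|\bd_t\|^2$ term and lets it absorb the $2L^2\eta_t^2\|\bd_t\|^2$ from the recursion instead of splitting $\bd_t=\nabla F(\bx_t)+\bepsilon_t$ as in Lemma~\ref{lemma:sgd}, is a harmless (in fact slightly tighter) variant, and your constants check out. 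However, there is one genuine gap: in the recursion you bound the momentum-noise term by $2a_{t+1}^2\sigma^2$. Under the adaptive stepsize $\eta_t = k/(w+\sum_{i\le t}G_i^2)^{1/3}$, the accumulated contribution of this term to the potential is
\[
\sum_{t=1}^T \frac{2a_{t+1}^2\sigma^2}{32L^2\eta_t} \;=\; \frac{c^2 k^3\sigma^2}{16L^2}\sum_{t=1}^T \frac{1}{w+\sum_{i\le t}G_i^2},
\]
and this is \emph{not} $O(\ln T)$ in general: the increments of the denominator are the realized $G_i^2$, not $\sigma^2$, so Lemma~\ref{lemma:log} does not apply. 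Concretely, if the actual gradient noise is much smaller than its upper bound $\sigma$ (say zero) and the gradients decay, then $\sum_i G_i^2$ stays bounded and the sum is $\Theta(\sigma^2 T/w)$; your $M$ then acquires a linear-in-$T$ term, $M^{3/4}/\sqrt{T}$ diverges, and the claimed rate is not proven. You also cannot repair this by arguing $\E[G_i^2]\approx\sigma^2$, since expectation cannot be pushed inside the random denominator in the helpful direction.

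The paper avoids this precisely by \emph{not} replacing the noise with $\sigma^2$: Lemma~\ref{lemma:epsilonrecursion} uses $\E[\eta_{t-1}^3\|\nabla f(\bx_t,\xi_t)-\nabla F(\bx_t)\|^2]\le\E[\eta_{t-1}^3 G_t^2]$ (variance at most second moment, legitimate because $\eta_{t-1}$ is independent of $\xi_t$), so the numerator $G_{t+1}^2$ matches the denominator's increments and, with $w\ge 2G^2\ge G^2+G_{t+1}^2$, Lemma~\ref{lemma:log} yields the $2k^3c^2\ln(T+2)$ term. This is exactly where the $G$-Lipschitz assumption earns its keep; your $\sigma^2$ substitution is what the non-adaptive variant in Appendix~\ref{sec:nolipschitz} does, but there $\eta_t=k/(w+\sigma^2 t)^{1/3}$ is deterministic so the sum telescopes. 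A secondary, smaller issue: your final conversion via the pathwise Cauchy--Schwarz with $\sum_t\eta_t^{-1}$, followed by decoupling $\E[\sqrt{XY}]\le\sqrt{\E[X]\E[Y]}$, leaves you needing $\E[\sum_t\|\nabla F(\bx_t)\|^2]$ on the right-hand side, which is not the quantity your loop controls. The paper closes the loop in the single variable $X=\sqrt{\sum_t\|\nabla F(\bx_t)\|^2}$, using $\E[1/\eta_T]\,\E[\eta_T\sum_t\|\nabla F(\bx_t)\|^2]\ge(\E[X])^2$ and Jensen on $x\mapsto x^{2/3}$ to reach $(\E[X])^2\le M(w+2T\sigma^2)^{1/3}+2^{1/3}M(\E[X])^{2/3}$, which it then solves case-wise; your named ingredients are the right ones, but you should carry them out in terms of $\E[X]$ for the self-bounding step to close.
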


In words, Theorem~\ref{thm:storm} guarantees that \storm\ will make the norm of the gradients converge to 0 at a rate of $O(\frac{\ln T}{\sqrt{T}})$ if there is no noise, and in expectation at a rate of $\frac{2\sigma^{1/3}}{T^{1/3}}$ in the stochastic case. We remark that we achieve both rates automatically, without the need to know the noise level nor the need to tune stepsizes.
Note that the rate when $\sigma\neq0$ matches the optimal rate~\citep{ArjevaniCDFSW19}, which was previously only obtained by SVRG-based algorithms that require a ``mega-batch''~\citep{FangLLZ18, ZhouXG18}.

The dependence on $G$ in this bound deserves some discussion - at first blush it appears that if $G\to 0$, the bound will go to infinity because the denominator in $M$ goes to zero. Fortunately, this is not so: the resolution is to observe that $F(\bx_1) - F^\star=O(G)$ and $\sigma = O(G)$, so that the numerators of $M$ actually go to zero at least as fast as the denominator. The dependence on $L$ may be similarly non-intuitive: as $L\to 0$, $M\to \infty$. In this case this is actually to be expected: if $L=0$, then there are no critical points (because the gradients are all the same!) and so we cannot actually find one. In general, $M$ should be regarded as an $O(\log(T))$ term where the constant indicates some inherent hardness level in the problem.

Finally, note that here we assumed that each $f(x,\xi)$ is $G$-Lipschitz in $x$. Prior variance reduction results (e.g. \citep{NguyenLST17, FangLLZ18, tran2019hybrid}) do not make use of this assumption. However, we we show in Appendix \ref{sec:nolipschitz} that simply replacing all instances of $G$ or $G_t$ in the parameters of $\storm$ with an oracle-tuned value of $\sigma$ allows us to dispense with this assumption while still avoiding all checkpoint gradients.


Also note that, as in similar work on stochastic minimization of non-convex functions, Theorem~\ref{thm:storm} only bounds the gradient of a randomly selected iterate~\citep{Ghadimi13}. However, in practical implementations we expect the last iterate to perform equally well.


Our analysis formalizes the intuition developed in the previous section through a Lyapunov potential function. Our Lyapunov function is somewhat non-standard: for smooth non-convex functions, the Lyapunov function is typically of the form $\Phi_t=F(\bx_t)$, but we propose to use the function $\Phi_t = F(\bx_t) + z_t\|\bepsilon_t\|^2$ for a time-varying $z_t\propto \eta_{t-1}^{-1}$, where $\bepsilon_t$ is the error in the update introduced in the previous section. The use of time-varying $z_t$ appears to be critical for us to avoid using any checkpoints: with constant $z_t$ it seems that one always needs at least one checkpoint gradient. Potential functions of this form have been used to analyze momentum algorithms in order to prove asymptotic guarantees, see, e.g., \citet{RuszczynskiS83}. However, as far as we know, this use of a potential is somewhat different than most variance reduction analyses, and so may provide avenues for further development.
We now proceed to the proof of Theorem~\ref{thm:storm}.

\subsection{Proof of Theorem~\ref{thm:storm}}\label{sec:stormproof}

First, we consider a generic SGD-style analysis. Most SGD analyses assume that the gradient estimates used by the algorithm are unbiased of $\nabla F(\bx_t)$, but unfortunately $\bd_t$ biased. As a result, we need the following slightly different analysis. For lack of space, the proof of this Lemma and the next one are in the Appendix.
\begin{restatable}{lemma}{thmsgd}
\label{lemma:sgd}
Suppose $\eta_t\le \frac{1}{4L}$ for all $t$. Then
\[
\E[F(\bx_{t+1}) - F(\bx_t) ]
\le \E\left[- \eta_t/4 \|\nabla F(\bx_t)\|^2 + 3\eta_t/4 \|\bepsilon_t\|^2\right]~.
\]
\end{restatable}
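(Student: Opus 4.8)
The plan is to apply the standard smoothness ``descent lemma'' to $F$ and then convert the resulting error terms into $\|\nabla F(\bx_t)\|^2$ and $\|\bepsilon_t\|^2$ using the definition $\bepsilon_t = \bd_t - \nabla F(\bx_t)$. First I would observe that since each $f(\cdot,\xi_t)$ is $L$-smooth almost surely and $F(\cdot)=\E[f(\cdot,\xi_t)]$, the function $F$ is itself $L$-smooth, so the quadratic upper bound
\[
F(\bx_{t+1}) \le F(\bx_t) + \nabla F(\bx_t)\cdot(\bx_{t+1}-\bx_t) + \frac{L}{2}\|\bx_{t+1}-\bx_t\|^2
\]
holds for every realization. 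Substituting the update $\bx_{t+1}-\bx_t=-\eta_t\bd_t$ gives
\[
F(\bx_{t+1}) - F(\bx_t) \le -\eta_t\,\nabla F(\bx_t)\cdot\bd_t + \frac{L\eta_t^2}{2}\|\bd_t\|^2~.
\]

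Next I would eliminate $\bd_t$ in favor of $\nabla F(\bx_t)$ and $\bepsilon_t$. Writing $\bd_t = \nabla F(\bx_t) + \bepsilon_t$ yields $\nabla F(\bx_t)\cdot\bd_t = \|\nabla F(\bx_t)\|^2 + \nabla F(\bx_t)\cdot\bepsilon_t$, and I would control the cross term with Young's inequality, $-\nabla F(\bx_t)\cdot\bepsilon_t \le \tfrac12\|\nabla F(\bx_t)\|^2 + \tfrac12\|\bepsilon_t\|^2$. For the quadratic term I would use $\|\bd_t\|^2 \le 2\|\nabla F(\bx_t)\|^2 + 2\|\bepsilon_t\|^2$. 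Collecting the coefficients produces
\[
F(\bx_{t+1}) - F(\bx_t) \le \left(-\tfrac{\eta_t}{2} + L\eta_t^2\right)\|\nabla F(\bx_t)\|^2 + \left(\tfrac{\eta_t}{2} + L\eta_t^2\right)\|\bepsilon_t\|^2~.
\]

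Finally, invoking the hypothesis $\eta_t \le \tfrac{1}{4L}$, i.e.\ $L\eta_t^2 \le \tfrac{\eta_t}{4}$, I would simplify the two coefficients to $-\tfrac{\eta_t}{4}$ and $\tfrac{3\eta_t}{4}$ respectively, which is precisely the claimed inequality; taking expectations of both sides then finishes the argument. I do not expect a serious obstacle here, as this is a direct smoothness computation. The only point requiring a little care is that $\eta_t$ is itself a random quantity (measurable with respect to the samples drawn through time $t$); this causes no difficulty because the inequality is derived deterministically for each realization \emph{before} any expectation is taken, so the expectation applies to both sides without the need for conditioning.
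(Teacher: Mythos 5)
Your proposal is correct and follows essentially the same route as the paper's proof: the smoothness descent inequality with $\bx_{t+1}-\bx_t=-\eta_t\bd_t$, Young's inequality on the cross term $-\eta_t\nabla F(\bx_t)\cdot\bepsilon_t$, the bound $\|\bd_t\|^2\le 2\|\nabla F(\bx_t)\|^2+2\|\bepsilon_t\|^2$, and finally $L\eta_t^2\le \eta_t/4$. Your closing remark about the randomness of $\eta_t$ is also sound, since every inequality in the chain holds pointwise per realization before expectations are taken.
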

The following technical observation is key to our analysis of \storm: it provides a recurrence that enables us to bound the variance of the estimates $\bd_t$.
\begin{restatable}{lemma}{thmepsilonrecursion}
\label{lemma:epsilonrecursion}
With the notation in Algorithm~\ref{alg:storm}, we have
\[
\E\left[\|\bepsilon_t\|^2/\eta_{t-1}\right]
\le \E\left[2 c^2 \eta_{t-1}^3 G_t^2 + (1-a_t)^2 (1+4 L^2 \eta_{t-1}^2)\|\bepsilon_{t-1}\|^2/\eta_{t-1}+4 (1-a_t)^2 L^2 \eta_{t-1}\|\nabla F(\bx_{t-1})\|^2\right]~.
\]
\end{restatable}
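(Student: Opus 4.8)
The plan is to convert the \storm\ update into a one-step recursion for $\bepsilon_t$ driven by a conditionally centered noise term, and then to expand the squared norm and take conditional expectations so that the cross term vanishes. Throughout, write $\E_{t-1}[\cdot]$ for the expectation conditioned on $\xi_1,\dots,\xi_{t-1}$. Rewriting \eqref{eq:storm_upd1} with $a=a_t$ and subtracting $\nabla F(\bx_t)$ from both sides (exactly as in Section~\ref{sec:momentum}) gives $\bepsilon_t = (1-a_t)\bepsilon_{t-1} + \bS_t$, where
\[
\bS_t := a_t(\nabla f(\bx_t,\xi_t) - \nabla F(\bx_t)) + (1-a_t)\big[(\nabla f(\bx_t,\xi_t) - \nabla f(\bx_{t-1},\xi_t)) - (\nabla F(\bx_t) - \nabla F(\bx_{t-1}))\big].
\]
The first thing I would establish is the measurability bookkeeping: $\bx_{t-1}$, $\bx_t$, $\bepsilon_{t-1}$, $\eta_{t-1}$, and $a_t = c\eta_{t-1}^2$ are all determined by $\xi_1,\dots,\xi_{t-1}$, whereas $\xi_t$ is independent of this history and satisfies $\E_{t-1}[\nabla f(\bx,\xi_t)] = \nabla F(\bx)$ for any history-measurable $\bx$. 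Both summands of $\bS_t$ are therefore conditionally centered, so $\E_{t-1}[\bS_t] = \bzero$.

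Next I would expand $\|\bepsilon_t\|^2 = (1-a_t)^2\|\bepsilon_{t-1}\|^2 + 2(1-a_t)\,\bepsilon_{t-1}\cdot\bS_t + \|\bS_t\|^2$ and apply $\E_{t-1}[\cdot]$. Since $\bepsilon_{t-1}$ and $a_t$ are history-measurable and $\E_{t-1}[\bS_t]=\bzero$, the inner-product term drops out, leaving $\E_{t-1}[\|\bepsilon_t\|^2] = (1-a_t)^2\|\bepsilon_{t-1}\|^2 + \E_{t-1}[\|\bS_t\|^2]$. It then remains to bound $\E_{t-1}[\|\bS_t\|^2]$. Using $\|\bu+\bv\|^2\le 2\|\bu\|^2 + 2\|\bv\|^2$ splits this into its two summands. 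For the first I invoke the elementary variance bound $\E[\|X-\E X\|^2]\le\E[\|X\|^2]$ with $X=\nabla f(\bx_t,\xi_t)$, giving $\E_{t-1}[\|\nabla f(\bx_t,\xi_t)-\nabla F(\bx_t)\|^2]\le\E_{t-1}[\|\nabla f(\bx_t,\xi_t)\|^2]=\E_{t-1}[G_t^2]$; this is precisely the step that keeps the analysis adaptive, producing $G_t^2$ rather than $\sigma^2$. For the second summand the same variance bound applied to $X=\nabla f(\bx_t,\xi_t)-\nabla f(\bx_{t-1},\xi_t)$, followed by $L$-smoothness and $\bx_t-\bx_{t-1}=-\eta_{t-1}\bd_{t-1}$, gives $L^2\eta_{t-1}^2\|\bd_{t-1}\|^2$, after which $\|\bd_{t-1}\|^2\le 2\|\bepsilon_{t-1}\|^2 + 2\|\nabla F(\bx_{t-1})\|^2$ separates the error and gradient contributions.

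Assembling the pieces, I would collect the two $\|\bepsilon_{t-1}\|^2$ terms into the factor $(1-a_t)^2(1+4L^2\eta_{t-1}^2)$, divide through by $\eta_{t-1}$, and substitute $a_t^2=c^2\eta_{t-1}^4$ so that the noise term becomes $2a_t^2 G_t^2/\eta_{t-1}=2c^2\eta_{t-1}^3 G_t^2$; a final application of the tower rule to remove the conditioning yields the stated inequality exactly. I do not expect a genuine obstacle, since the lemma is at bottom a careful second-moment computation, but the step demanding the most care is the conditioning: one must confirm that $a_t$ and $\eta_{t-1}$ depend only on $\xi_1,\dots,\xi_{t-1}$ and not on $\xi_t$, which is what legitimizes pulling them outside $\E_{t-1}$ and killing the cross term. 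The one genuinely non-routine decision is to bound the injected noise by $\E_{t-1}[G_t^2]$ rather than by $\sigma^2$, since it is exactly this choice that later allows the stepsize to adapt to the realized gradient norms.
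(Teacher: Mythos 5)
Your proposal is correct and takes essentially the same route as the paper's own proof: the identical decomposition $\bepsilon_t=(1-a_t)\bepsilon_{t-1}+\bS_t$ with the cross term killed by conditional centering (this is exactly the content of the paper's Lemma~\ref{lemma:zeroproducts}), the same variance bound $\E[\|X-\E X\|^2]\le \E[\|X\|^2]$ producing the adaptive $G_t^2$ term, and the same chain of Young's inequality, $L$-smoothness with $\bx_t-\bx_{t-1}=-\eta_{t-1}\bd_{t-1}$, and $\|\bd_{t-1}\|^2\le 2\|\bepsilon_{t-1}\|^2+2\|\nabla F(\bx_{t-1})\|^2$. Your measurability bookkeeping for $\eta_{t-1}$ and $a_t$ is exactly the justification the paper relies on, so no changes are needed.
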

Lemma~\ref{lemma:epsilonrecursion} exhibits a somewhat involved algebraic identity, so let us try to build some intuition for what it means and how it can help us. First, multiply both sides by $\eta_{t-1}$. Technically the expectations make this a forbidden operation, but we ignore this detail for now. Next, observe that $\sum_{t=1}^T G_t^2$ is roughly $\Theta(T)$ (since the the variance prevents $\|g_t\|^2$ from going to zero even when $\|\nabla F(\bx_t)\|$ does). Therefore $\eta_t$ is roughly $O(1/t^{1/3})$, and $a_t$ is roughly $O(1/t^{2/3})$. Discarding all constants, and observing that $(1-a_t)^2\le (1-a_t)$, the above Lemma is then saying that
\begin{align*}
\E[\|\bepsilon_t\|^2]&\le \E\left[\eta_{t-1}^4 + (1-a_t)\|\bepsilon_{t-1}\|^2 + \eta_{t-1}^2\|\nabla F(\bx_{t-1})\|^2\right]\\
&=\E\left[t^{-4/3} + \left(1-t^{-2/3}\right)\|\bepsilon_{t-1}\|^2 + t^{-1/3}\|\nabla F(\bx_{t-1})\|^2\right]~.
\end{align*}

We can use this recurrence to compute a kind of ``equilibrium value'' for $\E[\|\bepsilon_t\|^2]$: set $\E[\|\bepsilon_t\|^2]=\E[\|\bepsilon_{t-1}\|^2]$ and solve to obtain $\|\bepsilon_t\|^2$ is $O(1/t^{2/3} +\|\nabla F(\bx_t)\|^2)$. This in turn suggests that, whenever $\|\nabla F(\bx_t)\|^2$ is greater than $1/t^{2/3}$, the gradient estimate $\bd_t = \nabla F(\bx_t)+\bepsilon_t$ will be a very good approximation of $\nabla F(\bx_t)$ so that gradient descent should make very fast progress. Therefore, we expect the ``equilibrium value'' for $\|\nabla F(\bx_t)\|^2$ to be $O(1/T^{2/3})$, since this is the point at which the estimate $\bd_t$ becomes dominated by the error.

We formalize this intuition using a Lyapunov function of the form $\Phi_t = F(\bx_t) + z_t\|\bepsilon_t\|^2$  in the proof of Theorem~\ref{thm:storm} below.
\begin{proof}[Proof of Theorem \ref{thm:storm}]
Consider the potential $\Phi_t = F(\bx_t) + \frac{1}{32L^2 \eta_{t-1}}\|\bepsilon_t\|^2$. We will upper bound $\Phi_{t+1} - \Phi_t$ for each $t$, which will allow us to bound $\Phi_T$ in terms of $\Phi_1$ by summing over $t$.
First, observe that since $w\geq (4Lk)^3$, we have $\eta_{t}\le \frac{1}{4L}$. Further, since $a_{t+1}=c\eta_t^2$, we have $a_{t+1}\le \frac{c k}{4 L w^{1/3}}\le 1$ for all $t$.
Then, we first consider $\eta_{t}^{-1}\|\bepsilon_{t+1}\|^2 - \eta_{t-1}^{-1}\|\bepsilon_t\|^2$. Using Lemma~\ref{lemma:epsilonrecursion}, we obtain
\begin{align*}
\E&\left[\eta_{t}^{-1}\|\bepsilon_{t+1}\|^2 - \eta_{t-1}^{-1}\|\bepsilon_t\|^2\right] \\
&\le \E\left[2 c^2 \eta_{t}^3 G_{t+1}^2 + \frac{(1-a_{t+1})^2 (1+4 L^2 \eta_{t}^2)\|\bepsilon_{t}\|^2}{\eta_{t}}
+4 (1-a_{t+1})^2 L^2 \eta_{t}\|\nabla F(\bx_{t})\|^2 - \frac{\|\bepsilon_t\|^2}{\eta_{t-1}}\right]\\
&\le \E\left[\underbrace{2 c^2 \eta_{t}^3 G_{t+1}^2}_{A_t}+\underbrace{\left(\eta_{t}^{-1}(1-a_{t+1})(1+4 L^2 \eta_{t}^2) - \eta_{t-1}^{-1}\right)\|\bepsilon_t\|^2}_{B_t} + \underbrace{4 L^2 \eta_{t} \|\nabla F(\bx_{t})\|^2}_{C_t}\right]~.
\end{align*}

Let us focus on the terms of this expression individually. For the first term, $A_t$, observe that $w\ge 2G^2\ge G^2+G_{t+1}^2$ to obtain:
\begin{align*}
\sum_{t=1}^T A_t=\sum_{t=1}^T 2 c^2 \eta_{t}^3 G_{t+1}^2
&=\sum_{t=1}^T \frac{2 k^3 c^2 G_{t+1}^2}{w+\sum_{i=1}^{t} G_i^2}
\le \sum_{t=1}^T \frac{2 k^3 c^2 G_{t+1}^2}{G^2 +\sum_{i=1}^{t+1} G_i^2}
\leq 2 k^3 c^2\ln\left(1+\sum_{t=1}^{T+1} \frac{G_t^2}{G^2}\right) \\
&\leq 2 k^3 c^2\ln\left(T+2\right),
\end{align*}
where in the second to last inequality we used Lemma~\ref{lemma:log} in the Appendix.

For the second term $B_t$, we have
\[
B_t \le (\eta_{t}^{-1} - \eta_{t-1}^{-1}  +  \eta_{t}^{-1}(4L^2 \eta_{t}^2 - a_{t+1}) )\|\bepsilon_t\|^2
=\left(\eta_{t}^{-1} - \eta_{t-1}^{-1} + \eta_t(4L^2-c)\right)\|\bepsilon_t\|^2~.
\]
Let us focus on $\frac{1}{\eta_t} - \frac{1}{\eta_{t-1}}$ for a minute. Using the concavity of $x^{1/3}$, we have $(x+y)^{1/3}\le x^{1/3} + yx^{-2/3}/3$. Therefore:
\begin{align*}
\frac{1}{\eta_t} - \frac{1}{\eta_{t-1}}
&= \frac{1}{k}\left[\left(w+\sum_{i=1}^{t} G_i^2\right)^{1/3} - \left(w+\sum_{i=1}^{t-1} G_i^2\right)^{1/3}\right] 
\le \frac{G_{t}^2}{3k(w+\sum_{i=1}^{t-1} G_i^2)^{2/3}}\\
&\le \frac{G_{t}^2}{3k(w-G^2+\sum_{i=1}^{t} G_i^2)^{2/3}}
\le \frac{G_{t}^2}{3k(w/2+\sum_{i=1}^{t} G_i^2)^{2/3}}\\
&\le \frac{2^{2/3}G_{t}^2}{3k(w+\sum_{i=1}^{t} G_i^2)^{2/3}}
\le \frac{2^{2/3}G^2}{3k^3}\eta_t^2
\le \frac{2^{2/3}G^2}{12 L k^3}\eta_t
\le \frac{G^2}{7 L k^3}\eta_t,
\end{align*}
where we have used that that $w\geq (4Lk)^3$ to have $\eta_{t}\le \frac{1}{4L}$.

Further, since $c=28 L^2 + G^2/(7 L k^3)$, we have
\[
\eta_t(4L^2-c) 
\le - 24 L^2 \eta_t  - G^2 \eta_t /(7 L k^3)~.
\]
Thus, we obtain $B_t \le  - 24 L^2 \eta_t\|\bepsilon_t\|^2$. Putting all this together yields:
\begin{align}
\frac{1}{32L^2}&\sum_{t=1}^{T} \left(\frac{\|\bepsilon_{t+1}\|^2}{\eta_{t}} - \frac{\|\bepsilon_t\|^2}{\eta_{t-1}}\right)
\le \frac{k^3 c^2}{16L^2}\ln\left(T+2\right) + \sum_{t=1}^{T}\left[\frac{\eta_{t}}{8}\|\nabla F(\bx_{t})\|^2- \frac{3\eta_t}{4} \|\bepsilon_t\|^2\right]~. \label{eq:storm_1}
\end{align}

Now, we are ready to analyze the potential $\Phi_t$. Since $\eta_{t}\le \frac{1}{4L}$, we can use Lemma~\ref{lemma:sgd} to obtain
\begin{align*}
\E[\Phi_{t+1}-\Phi_t]
&\le\E\left[- \frac{\eta_t}{4} \|\nabla F(\bx_t)\|^2 + \frac{3\eta_t}{4}\|\bepsilon_t\|^2 + \frac{1}{32L^2 \eta_{t}}\|\bepsilon_{t+1}\|^2 - \frac{1}{32L^2 \eta_{t-1}}\|\bepsilon_t\|^2\right]~.
\end{align*}
Summing over $t$ and using \eqref{eq:storm_1}, we obtain
\begin{align*}
\E[\Phi_{T+1}-\Phi_1]
&\le \sum_{t=1}^{T}\E\left[- \frac{\eta_t}{4} \|\nabla F(\bx_t)\|^2 + \frac{3\eta_t}{4}\|\bepsilon_t\|^2 + \frac{1}{32L^2 \eta_{t}}\|\bepsilon_{t+1}\|^2 - \frac{1}{32L^2 \eta_{t-1}}\|\bepsilon_t\|^2\right]\\
&\le \E\left[\frac{k^3 c^2}{16L^2}\ln\left(T+2\right)-\sum_{t=1}^{T}\frac{\eta_t}{8}\|\nabla F(\bx_t)\|^2\right]~.
\end{align*}
Reordering the terms, we have
\begin{align*}
\E\left[\sum_{t=1}^{T} \eta_t \|\nabla F(\bx_t)\|^2\right]
&\le \E\left[8(\Phi_1 - \Phi_{T+1}) + k^3 c^2/(2L^2)\ln\left(T+2\right)\right]\\
&\le 8(F(\bx_1) - F^\star)+ \E[\|\bepsilon_1\|^2]/(4 L^2\eta_0) + k^3 c^2/(2L^2)\ln(T+2)\\
&\le 8(F(\bx_1) - F^\star)+ w^{1/3}\sigma^2/(4 L^2k) + k^3 c^2/(2L^2)\ln(T+2),
\end{align*}
where the last inequality is given by the definition of $\bd_1$ and $\eta_0$ in the algorithm.

Now, we relate $\E\left[\sum_{t=1}^{T} \eta_t \|\nabla F(\bx_t)\|^2\right]$ to $\E\left[\sum_{t=1}^{T}\|\nabla F(\bx_t)\|^2\right]$. First, since $\eta_t$ is decreasing,
\begin{align*}
\E\left[\sum_{t=1}^{T} \eta_t \|\nabla F(\bx_t)\|^2\right]&\ge \E\left[\eta_T\sum_{t=1}^{T} \|\nabla F(\bx_t)\|^2\right]~.
\end{align*}
Now, from Cauchy-Schwarz inequality, for any random variables $A$ and $B$ we have $\E[A^2]\E[B^2]\ge \E[AB]^2$. Hence, setting $A=\sqrt{\eta_T \sum_{t=1}^{T-1} \|\nabla F(\bx_t)\|^2}$ and $B=\sqrt{1/\eta_T}$, we obtain
\begin{align*}
\E[1/\eta_T]\, \E\left[\eta_T\sum_{t=1}^{T} \|\nabla F(\bx_t)\|^2\right]
&\ge \E\left[ \sqrt{\sum_{t=1}^{T} \|\nabla F(\bx_t)\|^2}\right]^2 ~.
\end{align*}
Therefore, if we set $M = \frac{1}{k}\left[8(F(\bx_1) - F^\star) + \frac{w^{1/3}\sigma^2}{4 L^2k} + \frac{k^3 c^2}{2L^2}\ln(T+2)\right]$, to get
\begin{align*}
\E\left[\sqrt{\sum_{t=1}^T \|\nabla F(\bx_t)\|^2}\right]^2
&\leq \E\left[\frac{8(F(\bx_1) - F^\star) + \frac{w^{1/3}\sigma^2}{4L^2 k} + \frac{k^3 c^2 }{2L^2}\ln(T+2)}{\eta_T}\right]\\
&= \E\left[\frac{kM}{\eta_T}\right]
\le \E\left[M\left(w+\sum_{t=1}^T G_t^2\right)^{1/3}\right]~.
\end{align*}

Define $\zeta_t = \nabla f(\bx_t,\xi_t) - \nabla F(\bx_t)$, so that $\E[\|\zeta_t\|^2]\le \sigma^2$. Then, we have $G_t^2=\|\nabla F(\bx_t)+\zeta_t\|^2\le 2\|\nabla F(\bx_t)\|^2 + 2\|\zeta_t\|^2$. Plugging this in and using $(a+b)^{1/3}\le a^{1/3}+b^{1/3}$ we obtain:
\begin{align*}
\E\left[\sqrt{\sum_{t=1}^T \|\nabla F(\bx_t)\|^2}\right]^2&\le \E\left[M\left(w+2\sum_{t=1}^T\|\zeta_t\|^2\right)^{1/3}+M2^{1/3}\left(\sum_{t=1}^T \|\nabla F(\bx_t)\|^2\right)^{1/3}\right]\\
&\le M(w+2T\sigma^2)^{1/3}+\E\left[2^{1/3}M\left(\sqrt{\sum_{t=1}^T \|\nabla F(\bx_t)\|^2}\right)^{2/3}\right]\\
&\le M(w+2T\sigma^2)^{1/3}+2^{1/3}M\left(\E\left[\sqrt{\sum_{t=1}^T \|\nabla F(\bx_t)\|^2}\right]\right)^{2/3},
\end{align*}
where we have used the concavity of $x\mapsto x^{a}$ for all $a\le 1$ to move expectations inside the exponents. Now, define $X=\sqrt{\sum_{t=1}^T \|\nabla F(\bx_t)\|^2}$. Then the above can be rewritten as:
\begin{align*}
(\E[X])^2&\le M(w+2T\sigma^2)^{1/3}+2^{1/3}M(\E[X])^{2/3}~.
\end{align*}
Note that this implies that either $(\E[X])^2\le 2M(w+T\sigma^2)^{1/3}$, or $(\E[X])^2\le 2\cdot 2^{1/3}M(\E[X])^{2/3}$. Solving for $\E[X]$ in these two cases, we obtain
\begin{align*}
\E[X]&\le \sqrt{2M}(w+2T\sigma^2)^{1/6} + 2M^{3/4}~.
\end{align*}
Finally, observe that by Cauchy-Schwarz we have $\sum_{t=1}^T \|\nabla F(\bx_t)\|/T\le X/\sqrt{T}$ so that
\[
\E\left[\sum_{t=1}^T \frac{\|\nabla F(\bx_t)\|}{T}\right]
\le \frac{\sqrt{2M}(w+2T\sigma^2)^{1/6} + 2M^{3/4}}{\sqrt{T}}
\le \frac{w^{1/6}\sqrt{2M} + 2M^{3/4}}{\sqrt{T}} + \frac{2\sigma^{1/3}}{T^{1/3}},
\]
where we used $(a+b)^{1/3}\le a^{1/3}+b^{1/3}$ in the last inequality.
\end{proof}

\section{Empirical Validation}
\label{sec:exp}

\begin{figure*}
\centering
\subfigure[][Train Loss vs Iterations]{\includegraphics[width=0.31\textwidth]{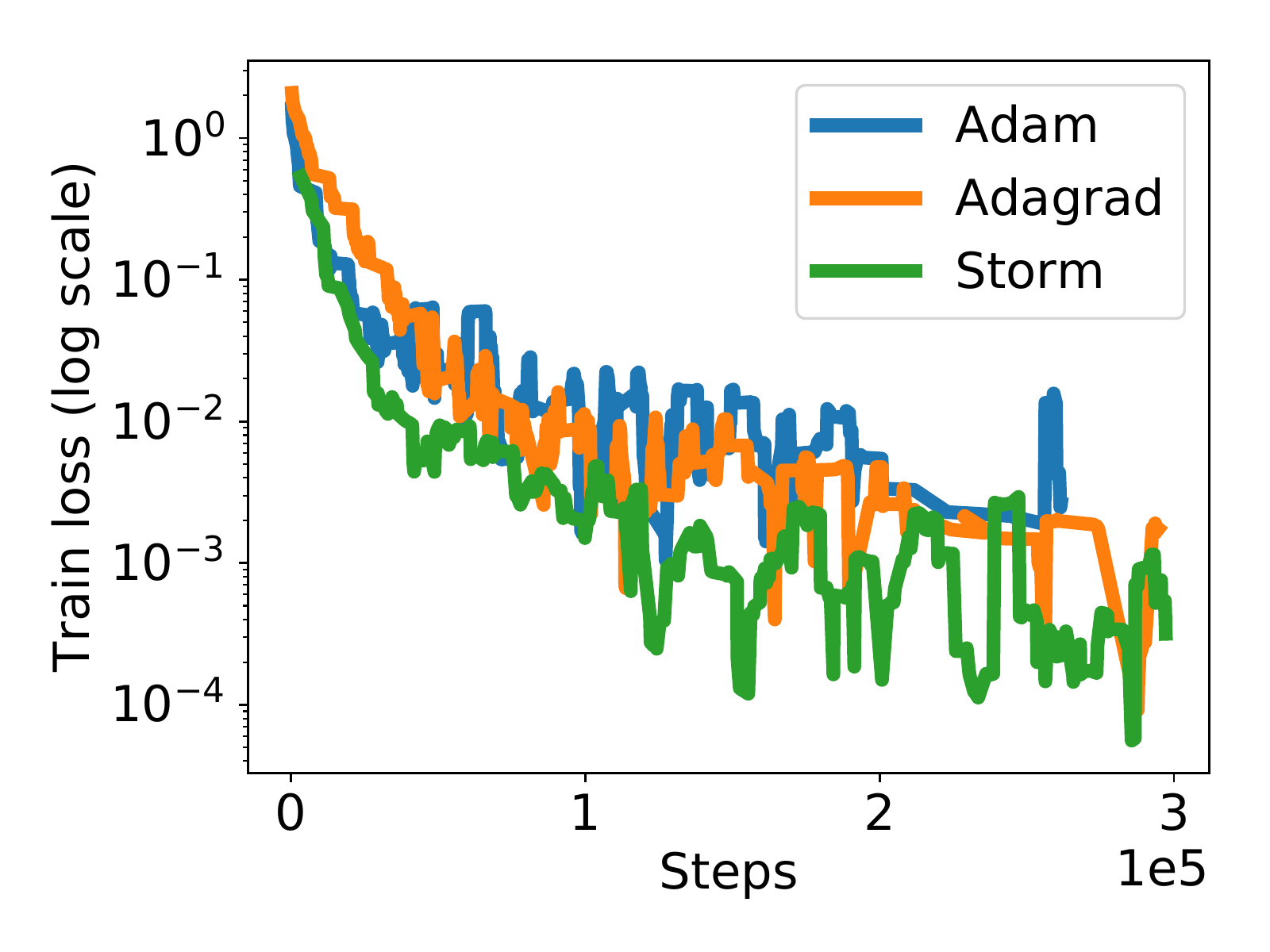}}\quad
\subfigure[][Train Accuracy vs Iterations]{\includegraphics[width=0.31\textwidth]{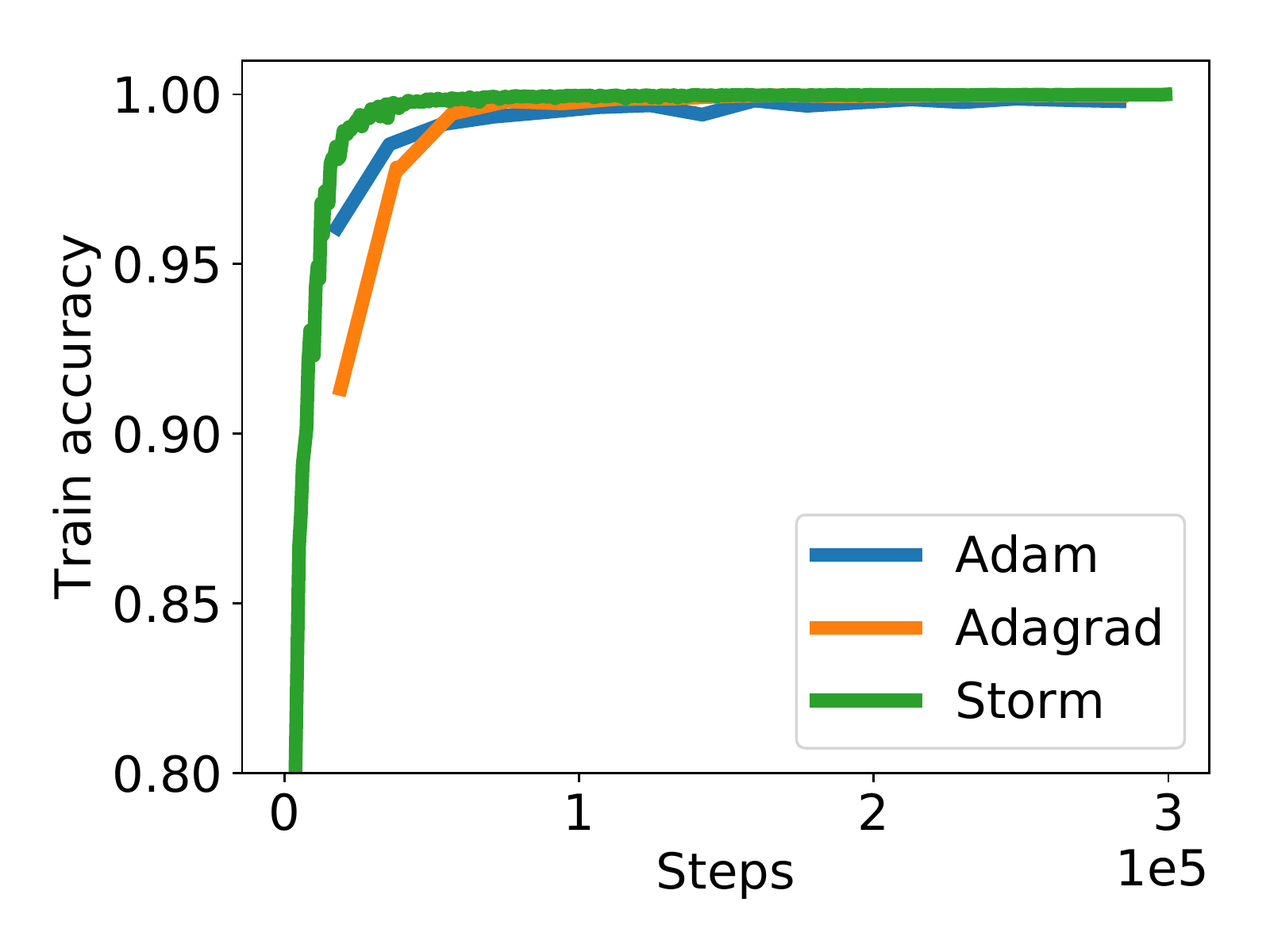}}\quad
\subfigure[][Test Accuracy vs Iterations]{\includegraphics[width=0.31\textwidth]{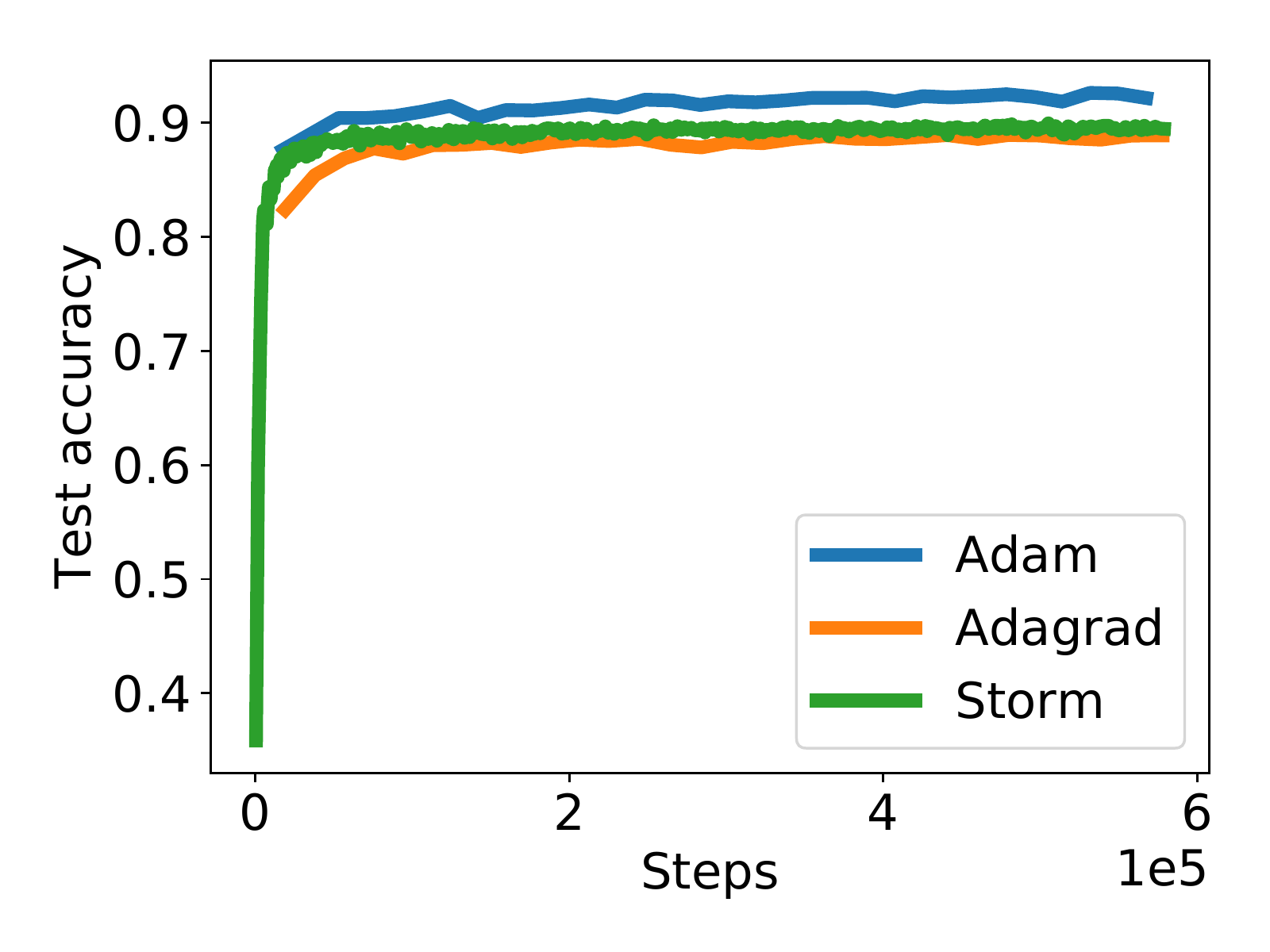}}
\caption{Experiments on CIFAR-10 with ResNet-32 Network.}
\label{fig:plots}
\end{figure*}

In order to confirm that our advances do indeed yield an algorithm that performs well and requires little tuning, we implemented \storm\ in TensorFlow~\citep{Tensorflow15} and tested its performance on the CIFAR-10 image recognition benchmark~\citep{Krizhevsky09} using a ResNet model~\citep{HeZRS16}, as implemented by the Tensor2Tensor package \citep{tensor2tensor}\footnote{\url{https://github.com/google-research/google-research/tree/master/storm_optimizer}}. We compare $\storm$ to AdaGrad and Adam, which are both very popular and successful optimization algorithms. The learning rates for AdaGrad and Adam were swept over a logarithmically spaced grid. For $\storm$, we set $w=k=0.1$ as a default\footnote{We picked these defaults by tuning over a logarithmic grid on the much-simpler MNIST dataset~\citep{LecunBBH98}. $w$ and $k$ were not tuned on CIFAR10.} and swept $c$ over a logarithmically spaced grid, so that all algorithms involved only one parameter to tune. No regularization was employed. We record train loss (cross-entropy), and accuracy on both the train and test sets (see Figure \ref{fig:plots}).

These results show that, while $\storm$ is only marginally better than AdaGrad on test accuracy, on both training loss and accuracy $\storm$ appears to be somewhat faster in terms of number of iterations. We note that the convergence proof we provide actually only applies to the training loss (since we are making multiple passes over the dataset). We leave for the future whether appropriate regularization can trade-off \storm's better training loss performance to obtain better test performance.

\section{Conclusion}
\label{sec:conc}

We have introduced a new variance-reduction-based algorithm, \storm, that finds critical points in stochastic, smooth, non-convex problems. Our algorithm improves upon prior algorithms by virtue of removing the need for checkpoint gradients, and incorporating adaptive learning rates. These improvements mean that \storm\ is substantially easier to tune: it does not require choosing the size of the checkpoints, nor how often to compute the checkpoints (because there are no checkpoints), and by using adaptive learning rates the algorithm enjoys the same robustness to learning rate tuning as popular algorithms like AdaGrad or Adam. \storm\ obtains the optimal convergence guarantee, adapting to the level of noise in the problem without knowledge of this parameter. We verified that on CIFAR-10 with a ResNet architecture, \storm\ indeed seems to be optimizing the objective in fewer iterations than baseline algorithms.

Additionally, we point out that \storm's update formula is strikingly similar to the standard SGD with momentum heuristic employed in practice. To our knowledge, no theoretical result actually establishes an advantage of adding momentum to SGD in stochastic problems, creating an intriguing mystery. While our algorithm is not precisely the same as the SGD with momentum, we feel that it provides strong intuitive evidence that momentum is performing some kind of variance reduction. We therefore hope that some of the analysis techniques used in this paper may provide a path towards explaining the advantages of momentum.

\subsubsection*{Acknowledgements}
This material is based upon work supported by the National Science Foundation under grant no. 1925930 ``Collaborative Research: TRIPODS Institute for Optimization and Learning''.

\bibliographystyle{plainnat_nourl}
\bibliography{../learning.bib}

\newpage

\appendix

\section{Extra Lemmas}\label{sec:lemmas}
In this section we (re)state and prove some Lemmas.

First, we provide the proof of Lemma \ref{lemma:sgd}, restated below for convenience.
\thmsgd*
\begin{proof}
Using the smoothness of $F$ and the definition of $\bx_{t+1}$ from the algorithm, we have 
\begin{align*}
\E[F(\bx_{t+1})] 
&\le \E\left[F(\bx_t) - \nabla F(\bx_t)\cdot \eta_t\bd_t+\frac{L\eta_t^2}{2}\|\bd_t\|^2\right]\\
&= \E\left[F(\bx_t) - \eta_t\|\nabla F(\bx_t)\|^2 - \eta_t\nabla F(\bx_t)\cdot \bepsilon_t + \frac{L\eta_t^2}{2}\|\bd_t\|^2\right]\\
&\le \E\left[F(\bx_t) - \frac{\eta_t}{2} \|\nabla F(\bx_t)\|^2 + \frac{\eta_t}{2}\|\bepsilon_t\|^2 + \frac{L\eta_t^2}{2}\|\bd_t\|^2\right]\\
&\le \E\left[F(\bx_t) - \frac{\eta_t}{2} \|\nabla F(\bx_t)\|^2 + \frac{\eta_t}{2}\|\bepsilon_t\|^2 + L\eta_t^2 \|\bepsilon_t\|^2 + L\eta_t^2 \|\nabla F(\bx_t)\|^2\right]\\
&\le \E\left[F(\bx_t) - \frac{\eta_t}{2} \|\nabla F(\bx_t)\|^2 + \frac{3\eta_t}{4}\|\bepsilon_t\|^2 + \frac{\eta_t}{4} \|\nabla F(\bx_t)\|^2\right],
\end{align*}
where in the second inequality we used Young's inequality, the third one uses $\|\bx+\by\|^2\leq 2\|\bx\|^2+2\|\by\|^2$, and the last one uses $\eta_t\le 1/4L$.
\end{proof}

This next Lemma is a technical observation that is important for the proof of Lemma \ref{lemma:epsilonrecursion}.
\begin{restatable}{lemma}{thmzeroproducts}
\label{lemma:zeroproducts}
\begin{align*}
&\E\left[(\nabla f(\bx_t,\xi_t)-\nabla F(\bx_t)) \cdot \eta_{t-1}^{-1}(1-a_t)^2\bepsilon_{t-1}\right]=0\\
&\E\left[(\nabla f(\bx_t,\xi_t)-\nabla f(\bx_{t-1},\xi_t) - \nabla F(\bx_t) + \nabla F(\bx_{t-1})) \cdot \eta_{t-1}^{-1}(1-a_t)^2\bepsilon_{t-1}\right]=0~.
\end{align*}
\end{restatable}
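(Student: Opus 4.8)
The plan is to exploit the fact that the fresh randomness $\xi_t$ enters the quantity $\bd_t$ but none of the factors appearing as the second argument of the inner products. Concretely, I would introduce the filtration $\mathcal{F}_{t-1} = \sigma(\xi_1,\dots,\xi_{t-1})$ and argue by conditioning on $\mathcal{F}_{t-1}$ followed by the tower rule. The first step is to record the measurability structure dictated by Algorithm~\ref{alg:storm}: tracing the updates, $\bx_{t-1}$, $\bx_t=\bx_{t-1}-\eta_{t-1}\bd_{t-1}$, $\eta_{t-1}$ (a function of $G_1,\dots,G_{t-1}$), $a_t=c\eta_{t-1}^2$, $\bd_{t-1}$ (which uses $\xi_{t-1}$), and hence $\bepsilon_{t-1}=\bd_{t-1}-\nabla F(\bx_{t-1})$ are all $\mathcal{F}_{t-1}$-measurable. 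In particular the entire vector $\eta_{t-1}^{-1}(1-a_t)^2\bepsilon_{t-1}$ is $\mathcal{F}_{t-1}$-measurable, while the only object in either identity that depends on the new sample $\xi_t$ is the leading gradient factor, and $\xi_t$ is independent of $\mathcal{F}_{t-1}$ by assumption.

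For the first identity, I would condition on $\mathcal{F}_{t-1}$ and pull the measurable vector $\eta_{t-1}^{-1}(1-a_t)^2\bepsilon_{t-1}$ out of the conditional expectation, leaving the inner product of this fixed vector with $\E[\nabla f(\bx_t,\xi_t)-\nabla F(\bx_t)\mid \mathcal{F}_{t-1}]$. Since $\bx_t$ is $\mathcal{F}_{t-1}$-measurable and $\xi_t$ is independent of $\mathcal{F}_{t-1}$, the unbiasedness of the stochastic gradient, $\E[\nabla f(\bx,\xi_t)\mid\bx]=\nabla F(\bx)$, gives $\E[\nabla f(\bx_t,\xi_t)\mid\mathcal{F}_{t-1}]=\nabla F(\bx_t)$, so this conditional mean is zero. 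Taking expectations and applying the tower property yields the claim.

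The second identity follows the same template, with the extra observation that both evaluation points $\bx_t$ and $\bx_{t-1}$ are $\mathcal{F}_{t-1}$-measurable. Conditioning on $\mathcal{F}_{t-1}$ and using independence of $\xi_t$ twice gives $\E[\nabla f(\bx_t,\xi_t)\mid\mathcal{F}_{t-1}]=\nabla F(\bx_t)$ and $\E[\nabla f(\bx_{t-1},\xi_t)\mid\mathcal{F}_{t-1}]=\nabla F(\bx_{t-1})$; the four terms then cancel in pairs, so the conditional mean of the gradient-difference factor is again zero and the tower property finishes the argument.

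The main obstacle is not any computation but getting the measurability bookkeeping exactly right: the whole construction hinges on the subtle point that, although $\bd_t$ is built using $\xi_t$, the factors $\bx_t$, $\bx_{t-1}$, and $\bepsilon_{t-1}$ are all determined \emph{before} $\xi_t$ is drawn and therefore do not depend on it. This is precisely what makes the added variance-reduction term behave like a martingale-difference increment. I would be careful to note that $\bepsilon_{t-1}$ depends on $\xi_{t-1}$ but not on $\xi_t$, so that conditioning on $\mathcal{F}_{t-1}$, rather than on a larger $\sigma$-algebra, is the correct level at which both gradient factors become mean-zero.
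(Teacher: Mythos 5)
Your proposal is correct and takes essentially the same route as the paper's proof: the paper likewise observes that $\bx_t$ and $\bepsilon_{t-1}$ (and implicitly $\eta_{t-1}$, $a_t$) are determined by $\xi_1,\dots,\xi_{t-1}$ and hence independent of $\xi_t$, then takes the expectation first over $\xi_t$ conditionally on $\xi_1,\dots,\xi_{t-1}$ --- which vanishes by unbiasedness of the stochastic gradient at $\mathcal{F}_{t-1}$-measurable points --- and finishes with the tower property. Your explicit filtration bookkeeping is just a more formal rendering of the same two-step conditioning argument.
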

\begin{proof}
From inspection of the update formula, the hypothesis implies that $\bepsilon_{t-1}=\bd_{t-1} - \nabla F(\bx_{t-1})$ and $\bx_t$ are both independent of $\xi_t$. Then, by first taking expectation with respect to $\xi_t$ and then with respect to $\xi_1,\dots,\xi_{t-1}$, we obtain
\begin{align*}
\E&\left[(\nabla f(\bx_t,\xi_t)-\nabla F(\bx_t)) \cdot \eta_{t-1}^{-1}(1-a_t)^2\bepsilon_{t-1}\right]\\
&=\E\left[\E\left[(\nabla f(\bx_t, \xi_t)-\nabla F(\bx_t)) \cdot \eta_{t-1}^{-1}(1-a_t)^2\bepsilon_{t-1}|\xi_1,\dots,\xi_{t-1}\right]\right]
=0~.
\end{align*}
Analogously, for the second equality we have
\begin{align*}
\E&\left[(\nabla f(\bx_t,\xi_t)-\nabla f(\bx_{t-1},\xi_t) - \nabla F(\bx_t) + \nabla F(\bx_{t-1})) \cdot \eta_{t-1}^{-1}(1-a_t)^2\bepsilon_{t-1}\right]\\
&=\E\left[\E\left[(\nabla f(\bx_t,\xi_t)-\nabla f(\bx_{t-1},\xi_t) - (\nabla F(\bx_t) - \nabla F(\bx_{t-1}))) \cdot \eta_{t-1}^{-1}(1-a_t)^2\bepsilon_{t-1}|\xi_1,\dots,\xi_{t-1}\right]\right] \\
&=0~. \qedhere
\end{align*}
\end{proof}

The following Lemma is a standard consequence of convexity.
\begin{restatable}{lemma}{thmlog}
\label{lemma:log}
Let $a_0>0$ and $a_1,\dots,a_T\geq0$. Then
\[
\sum_{t=1}^T \frac{a_t}{a_0+\sum_{i=1}^{t} a_i} 
\leq \ln\left(1+\frac{\sum_{i=1}^{t} a_i}{a_0}\right)~.
\]
\end{restatable}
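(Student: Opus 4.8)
The plan is to recognize this as the standard ``logarithmic'' telescoping bound that underlies AdaGrad-style analyses. First I would introduce the running partial sums $S_t = a_0 + \sum_{i=1}^t a_i$, so that $S_0 = a_0 > 0$, the sequence $\{S_t\}$ is nondecreasing since every $a_t \ge 0$, and each summand on the left-hand side rewrites cleanly as $\frac{a_t}{S_t} = \frac{S_t - S_{t-1}}{S_t}$. (I read the right-hand side as $\ln(1 + \frac{\sum_{i=1}^T a_i}{a_0})$, i.e. with upper limit $T$ on the inner sum, since otherwise the index $t$ appearing there would be a free variable.)

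The heart of the argument is the per-term estimate $\frac{S_t - S_{t-1}}{S_t} \le \ln S_t - \ln S_{t-1}$. I would derive this from concavity of the logarithm: the tangent line to $\ln$ at the point $S_t$ lies above the curve, so $\ln S_{t-1} \le \ln S_t + \frac{1}{S_t}(S_{t-1} - S_t)$, and rearranging gives exactly the claim. Equivalently one can write $\ln S_t - \ln S_{t-1} = \int_{S_{t-1}}^{S_t} \frac{dx}{x} \ge \int_{S_{t-1}}^{S_t} \frac{dx}{S_t} = \frac{S_t - S_{t-1}}{S_t}$, where the inequality uses $\frac{1}{x} \ge \frac{1}{S_t}$ for all $x \le S_t$. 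Either form makes the monotonicity $S_{t-1} \le S_t$ the only structural fact required, which matches the concavity-of-$x^{1/3}$ trick already used in the main proof.

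Summing this per-term bound over $t = 1, \dots, T$, the right-hand side telescopes to $\ln S_T - \ln S_0$, yielding $\sum_{t=1}^T \frac{a_t}{S_t} \le \ln S_T - \ln S_0 = \ln(S_T / S_0) = \ln(1 + \frac{\sum_{i=1}^T a_i}{a_0})$, which is the desired conclusion. There is no genuine obstacle here: the only point to check is that $S_{t-1} > 0$ for every $t$, which is guaranteed by $a_0 > 0$ together with $a_i \ge 0$, so that all logarithms and divisions are well defined. The single idea worth highlighting is the decision to telescope \emph{through} the logarithm rather than trying to bound the individual fractions directly.
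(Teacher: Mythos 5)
Your proof is correct and takes essentially the same route as the paper: both rest on the per-term concavity bound $\frac{a_t}{a_0+\sum_{i=1}^{t}a_i}\le \ln\bigl(a_0+\sum_{i=1}^{t}a_i\bigr)-\ln\bigl(a_0+\sum_{i=1}^{t-1}a_i\bigr)$ followed by telescoping over $t=1,\dots,T$, with your tangent-line and integral derivations just making the paper's one-line concavity step explicit. You were also right to read the inner sum on the right-hand side as running to $T$ rather than $t$; that is a typo in the statement.
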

\begin{proof}
By the concavity of the log function, we have
\[
\ln\left(a_0+\sum_{i=1}^{t} a_i\right) - \ln\left(a_0+\sum_{i=1}^{t-1} a_i\right)
\geq \frac{a_t}{a_0+\sum_{i=1}^{t} a_i}~.
\]
Summing over $t=1, \dots, T$ both sides of the inequality, we have the stated bound.
\end{proof}

\subsection{Proof of Lemma \ref{lemma:epsilonrecursion}}
In this section we present the deferred proof of Lemma \ref{lemma:epsilonrecursion}, restating the result below for reference
\thmepsilonrecursion*
\begin{proof}
First, observe that 
\begin{align}
\E&\left[\eta^3_{t-1}\|\nabla f(\bx_t,\xi_t)-\nabla F(\bx_t)\|^2\right]\nonumber \\
&=\E\left[\eta^3_{t-1} (\|\nabla f(\bx_t,\xi_t)\|^2+\|\nabla F(\bx_t)\|^2- 2 \nabla f(\bx_t,\xi_t) \cdot \nabla F(\bx_t))\right] \nonumber \\
&=\E\left[\eta^3_{t-1}\E\left[\|\nabla f(\bx_t,\xi_t)\|^2+\|\nabla F(\bx_t)\|^2- 2 \nabla f(\bx_t,\xi_t) \cdot \nabla F(\bx_t)\middle|\xi_1,\dots,\xi_{t-1}\right]\right] \nonumber \\
&=\E\left[\eta^3_{t-1}(\|\nabla f(\bx_t,\xi_t)\|^2-\|\nabla F(\bx_t)\|^2)\right] \nonumber \\
&\leq \E\left[\eta^3_{t-1}\|\nabla f(\bx_t,\xi_t)\|^2\right]~. \label{eq:epsilonrecursion_1}
\end{align}
In the same way, we also have that 
\begin{align}
\E&\left[\eta_{t-1}^{-1} (1-a_t^2)\|\nabla f(\bx_{t},\xi_t) - \nabla f(\bx_{t-1},\xi_t) - \nabla F(\bx_t)+\nabla F(\bx_{t-1})\|^2\right] \nonumber \\
&\leq \E\left[\eta_{t-1}^{-1} (1-a_t^2)\|\nabla f(\bx_{t},\xi_t) - \nabla f(\bx_{t-1},\xi_t))\|^2\right]~. \label{eq:epsilonrecursion_2}
\end{align}

By definition of $\bepsilon_t$ and the notation in Algorithm~\ref{alg:storm}, we have $\bepsilon_t=\bd_t - \nabla F(\bx_t) = \nabla f(\bx_{t}, \xi_{t}) + (1-a_{t})(\bd_{t-1} - \nabla f(\bx_{t-1}, \xi_{t}))- \nabla F(\bx_t)$. Hence, we can write
\begin{align*}
\E&\left[\eta_{t-1}^{-1}\|\bepsilon_t\|^2\right] 
= \E\left[\eta_{t-1}^{-1}\|\nabla f(\bx_t,\xi_t) + (1-a_t) (\bd_{t-1} - \nabla f(\bx_{t-1},\xi_t)) - \nabla F(\bx_t)\|^2\right] \\
&= \E\left[\eta_{t-1}^{-1}\|a_t(\nabla f(\bx_t,\xi_t) -\nabla F(\bx_t)) + (1-a_t) (\nabla f(\bx_{t},\xi_t) - \nabla f(\bx_{t-1},\xi_t) - \nabla F(\bx_t)+\nabla F(\bx_{t-1}))\right.\\
&\quad\quad\quad\quad+ \left.(1-a_t)(\bd_{t-1}-\nabla F(\bx_{t-1}))\|^2\right] \\
&\leq \E\left[2 c^2 \eta_{t-1}^3 \|\nabla f(\bx_t,\xi_t)-\nabla F(\bx_t)\|^2 + 2 \eta_{t-1}^{-1} (1-a_t)^2 \|\nabla f(\bx_{t},\xi_t) - \nabla f(\bx_{t-1},\xi_t) - \nabla F(\bx_t)+\nabla F(\bx_{t-1})\|^2 \right.\\
&\quad\quad\quad\quad\left.+ \eta_{t-1}^{-1}(1-a_t)^2\|\bepsilon_{t-1}\|^2\right] \\
&\leq \E\left[2 c^2 \eta_{t-1}^3 \|\nabla f(\bx_t,\xi_t)\|^2 + 2 \eta_{t-1}^{-1} (1-a_t)^2 \|\nabla f(\bx_{t},\xi_t) - \nabla f(\bx_{t-1},\xi_t)\|^2 + \eta_{t-1}^{-1} (1-a_t)^2\|\bepsilon_{t-1}\|^2\right] \\
&\leq \E\left[2 c^2 \eta_{t-1}^3 G_t^2 + 2 \eta_{t-1}^{-1} (1-a_t)^2 L^2 \|\bx_{t}-\bx_{t-1}\|^2 + \eta_{t-1}^{-1}(1-a_t)^2\|\bepsilon_{t-1}\|^2\right] \\
&= \E\left[2 c^2 \eta_{t-1}^3 G_t^2 + 2 (1-a_t)^2 L^2 \eta_{t-1}\|\bd_{t-1}\|^2 + \eta_{t-1}^{-1} (1-a_t)^2\|\bepsilon_{t-1}\|^2\right] \\
&= \E\left[2 c^2 \eta_{t-1}^3 G_t^2 + 2 (1-a_t)^2 L^2 \eta_{t-1}\|\bepsilon_{t-1}+\nabla F(\bx_{t-1})\|^2 + \eta_{t-1}^{-1}(1-a_t)^2\|\bepsilon_{t-1}\|^2\right] \\
&\leq \E\left[2 c^2 \eta_{t-1}^3 G_t^2 + 4 (1-a_t)^2 L^2 \eta_{t-1}(\|\bepsilon_{t-1}\|^2+\|\nabla F(\bx_{t-1})\|^2) + \eta_{t-1}^{-1}(1-a_t)^2\|\bepsilon_{t-1}\|^2\right] \\
&= \E\left[2 c^2 \eta_{t-1}^3 G_t^2 + \eta_{t-1}^{-1}(1-a_t)^2 (1+4 L^2 \eta_{t-1}^2)\|\bepsilon_{t-1}\|^2+4 (1-a_t)^2 L^2 \eta_{t-1}\|\nabla F(\bx_{t-1})\|^2\right],
\end{align*}
where in the first inequality we used Lemma~\ref{lemma:zeroproducts} (See Appendix \ref{sec:lemmas}) and $\|\bx+\by\|^2\leq 2\|\bx\|^2+2\|\by\|^2$, in the second inequality we used \eqref{eq:epsilonrecursion_1} and \eqref{eq:epsilonrecursion_2}, in the third one the Lipschitzness and smoothness of the functions $f$, and in the last inequality we used again $\|\bx+\by\|^2\leq 2\|\bx\|^2+\|\by\|^2$.
\end{proof}

\section{Non-adaptive Bound Without Lipschitz Assumption}\label{sec:nolipschitz}

In our analysis of $\storm$ in Theorem \ref{thm:storm} we assume that the losses are $G$-Lipschitz for some known constant $G$ with probability 1. Often this kind of Lipschitz assumption is avoided in other variance-reduction analyses \citep{NguyenLST17, FangLLZ18, tran2019hybrid}. These works also require oracle knowlede of the parameter $\sigma$. It turns out that our use of this assumption is actually only necessary in order to facilitate our adaptive analysis - in fact even for ordinary (non-variance-reduced) gradient descent methods the Lipschitz assumption seems to be a common thread in adaptive analyses \citep{LiO19,WardWB18}. If we are given access to the true value of $\sigma$, then we can choose a deterministic learning rate schedule in order to avoid requiring a Lipschitz bound. All that needs be done is replace all instances of $G$ or $G_t$ in $\storm$ with the oracle-tuned value $\sigma$, which we outline in Algorithm \ref{alg:storm_nolipschitz} below.

\begin{algorithm}[t]
\caption{\storm\ without Lipschitz Bound}
\label{alg:storm_nolipschitz}
\begin{algorithmic}[1]
\STATE {\bfseries Input:} Parameters $k$, $w$, $c$, initial point $\bx_1$
\STATE Sample $\xi_1$
\STATE $G_1\gets \|\nabla f(\bx_1, \xi_1)\|$
\STATE $\bd_1\gets \nabla f(\bx_1, \xi_1)$
\STATE $\eta_0\gets \frac{k}{w^{1/3}}$
\FOR{$t=1$ {\bfseries to} $T$}
\STATE $\eta_t \gets \frac{k}{(w+ \sigma^2 t)^{1/3}}$
\STATE $\bx_{t+1}\gets \bx_t - \eta_t\bd_t$
\STATE $a_{t+1} \gets c\eta_t^2$
\STATE Sample $\xi_{t+1}$
\STATE $G_{t+1}\gets \|\nabla f(\bx_{t+1}, \xi_{t+1})\|$
\STATE $\bd_{t+1} \gets \nabla f(\bx_{t+1}, \xi_{t+1}) + (1-a_{t+1})(\bd_t - \nabla f(\bx_t, \xi_{t+1}))$
\ENDFOR
\STATE Choose $\hat \bx$ uniformly at random from $\bx_1,\dots,\bx_T$. (In practice, set $\hat\bx = \bx_T$).
\RETURN $\hat \bx$
\end{algorithmic}
\end{algorithm}


The convergence guarantee of Algorithm \ref{alg:storm_nolipschitz} is presented in Theorem \ref{thm:storm_nolipschitz} below, which is nearly identical to Theorem \ref{thm:storm} but losses adaptivity to $\sigma$ in exchange for removing the $G$-Lipschitz requirement.
\begin{restatable}{theorem}{thmstorm_nolipschitz}
\label{thm:storm_nolipschitz}
Under the assumptions in Section~\ref{sec:setting}, for any $b>0$, we write $k=\frac{b \sigma^\frac{2}{3}}{L}$. Set $c=28L^2 + \sigma^2/(7 L k^3) = L^2(28 + 1/(7 b^3))$ and $w=\max\left((4Lk)^3, 2\sigma^2, \left(\tfrac{c k}{4 L}\right)^3\right) = \sigma^2\max\left((4 b)^3, 2, (28b+\frac{1}{7b^2})^3/64\right)$. Then, Algorithm \ref{alg:storm_nolipschitz} satisfies
\[
\frac{1}{T}\E\left[\sum_{t=1}^{T} \|\nabla F(\bx_t)\|^2\right] \le \frac{M\frac{w^{1/3}}{k}}{T} + \frac{M\frac{w\sigma^{2/3}}{k}}{T^{2/3}},
\]
where $M=8(F(\bx_1) - F^\star)+ \frac{ w^{1/3}\sigma^2}{4 L^2k} + \frac{k^3 c^2 }{2L^2}\ln(T+2)$.
\end{restatable}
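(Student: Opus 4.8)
The plan is to follow the proof of Theorem~\ref{thm:storm} almost verbatim, performing the substitution $G\to\sigma$ throughout and exploiting the two simplifications afforded by the fact that in Algorithm~\ref{alg:storm_nolipschitz} the learning rate $\eta_t=k/(w+\sigma^2 t)^{1/3}$ is now \emph{deterministic}. First I would check the feasibility conditions: since $w\ge(4Lk)^3$ we get $\eta_t\le\tfrac{1}{4L}$, and since $w\ge(ck/(4L))^3$ we get $a_{t+1}=c\eta_t^2\le 1$, exactly as before. Lemma~\ref{lemma:sgd} uses only smoothness and $\eta_t\le\tfrac{1}{4L}$, so it carries over unchanged.

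The only real modification is to the variance recursion. Lemma~\ref{lemma:epsilonrecursion} carries a term $2c^2\eta_{t-1}^3 G_t^2$ that arose, via~\eqref{eq:epsilonrecursion_1}, from bounding $\E[\eta_{t-1}^3\|\nabla f(\bx_t,\xi_t)-\nabla F(\bx_t)\|^2]\le\E[\eta_{t-1}^3 G_t^2]$. Because $\eta_{t-1}$ is now deterministic, I would instead pull it out of the expectation and apply the variance bound directly, obtaining $\E[\eta_{t-1}^3\|\nabla f(\bx_t,\xi_t)-\nabla F(\bx_t)\|^2]=\eta_{t-1}^3\E[\|\nabla f(\bx_t,\xi_t)-\nabla F(\bx_t)\|^2]\le\eta_{t-1}^3\sigma^2$. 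Retracing the rest of the proof of Lemma~\ref{lemma:epsilonrecursion} with no other change yields the analogous recursion with the deterministic $2c^2\eta_{t-1}^3\sigma^2$ in place of the random $2c^2\eta_{t-1}^3 G_t^2$. This single replacement is what removes the Lipschitz assumption and, crucially, keeps the forthcoming $A_t$ term logarithmic.

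Next I would run the Lyapunov argument with $\Phi_t=F(\bx_t)+\tfrac{1}{32L^2\eta_{t-1}}\|\bepsilon_t\|^2$. The three telescoped pieces are handled as in Theorem~\ref{thm:storm}: the new $A_t$ term becomes $\sum_t 2c^2\eta_t^3\sigma^2=2c^2k^3\sum_t\tfrac{\sigma^2}{w+\sigma^2 t}\le 2c^2k^3\ln(T+2)$ by Lemma~\ref{lemma:log} (using $w\ge 2\sigma^2$); the $B_t$ term is controlled by the same concavity estimate, which now reads $\tfrac{1}{\eta_t}-\tfrac{1}{\eta_{t-1}}=\tfrac1k[(w+\sigma^2 t)^{1/3}-(w+\sigma^2(t-1))^{1/3}]\le\tfrac{\sigma^2}{7Lk^3}\eta_t$, so that the choice $c=28L^2+\sigma^2/(7Lk^3)$ gives $B_t\le-24L^2\eta_t\|\bepsilon_t\|^2$; and $C_t=4L^2\eta_t\|\nabla F(\bx_t)\|^2$ is unchanged. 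Summing over $t$, using $\E[\|\bepsilon_1\|^2]\le\sigma^2$ with $\eta_0=k/w^{1/3}$, yields $\E[\sum_{t=1}^T\eta_t\|\nabla F(\bx_t)\|^2]\le M$ with $M$ exactly as in the statement.

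Finally, the conversion to the stated bound is where the argument is genuinely \emph{simpler} than the adaptive case: since $\eta_t$ is deterministic and decreasing, $\eta_T\,\E[\sum_t\|\nabla F(\bx_t)\|^2]\le\E[\sum_t\eta_t\|\nabla F(\bx_t)\|^2]\le M$ with no appeal to Cauchy--Schwarz, hence $\E[\sum_t\|\nabla F(\bx_t)\|^2]\le M/\eta_T=\tfrac{M}{k}(w+\sigma^2 T)^{1/3}\le\tfrac{M}{k}(w^{1/3}+\sigma^{2/3}T^{1/3})$, and dividing by $T$ gives a bound of the stated form $O(1/T+\sigma^{2/3}/T^{2/3})$. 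I expect the main obstacle to be bookkeeping rather than depth: re-deriving the modified recursion and verifying that the deterministic $\eta_t$ still satisfies the concavity estimate with constant $\tfrac{\sigma^2}{7Lk^3}$. The conceptual crux is simply recognizing that determinism of $\eta_t$ lets us bound the noise term by $\sigma^2$ inside the expectation, which is what both dispenses with the Lipschitz assumption and prevents an unabsorbable $\|\nabla F\|^2$ contribution in $A_t$.
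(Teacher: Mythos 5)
Your proposal is correct and follows essentially the same route as the paper's own proof: the paper likewise substitutes $\sigma^2$ for $G_t^2$ in the recursion (its Lemma~\ref{lemma:epsilonrecursion_nolipschitz}, proved exactly by pulling the now-deterministic $\eta_{t-1}^3$ out of the expectation), runs the identical Lyapunov argument with $c=28L^2+\sigma^2/(7Lk^3)$ to get $\E[\sum_t \eta_t\|\nabla F(\bx_t)\|^2]\le M$, and then exploits determinism of $\eta_T$ to bypass Cauchy--Schwarz in the final conversion. Note that your derivation actually yields $\frac{M\sigma^{2/3}/k}{T^{2/3}}$ for the second term, which is slightly tighter than the theorem's stated $\frac{Mw\sigma^{2/3}/k}{T^{2/3}}$ --- the extra factor of $w$ there appears to be a typo in the paper, and your bound is the one the paper's own computation supports.
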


In order to prove this Theorem, we need a non-adaptive analog of Lemma \ref{lemma:epsilonrecursion}:

\begin{restatable}{lemma}{thmepsilonrecursion_nolipschitz}
\label{lemma:epsilonrecursion_nolipschitz}
With the notation in Algorithm~\ref{alg:storm_nolipschitz}, we have
\[
\E\left[\frac{\|\bepsilon_t\|^2}{\eta_{t-1}}\right]
\le \E\left[2 c^2 \eta_{t-1}^3  \sigma^2 +\frac{(1-a_t)^2 (1+4 L^2 \eta_{t-1}^2)\|\bepsilon_{t-1}\|^2}{\eta_{t-1}}+4 (1-a_t)^2 L^2 \eta_{t-1}\|\nabla F(\bx_{t-1})\|^2\right]~.
\]
\end{restatable}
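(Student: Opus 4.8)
The plan is to follow the proof of Lemma~\ref{lemma:epsilonrecursion} essentially line by line, since Algorithm~\ref{alg:storm_nolipschitz} differs from Algorithm~\ref{alg:storm} only in the learning-rate schedule: here $\eta_{t-1}=k/(w+\sigma^2(t-1))^{1/3}$ is a purely \emph{deterministic} quantity. I would begin from the same decomposition $\bepsilon_t = a_t(\nabla f(\bx_t,\xi_t)-\nabla F(\bx_t)) + (1-a_t)(\nabla f(\bx_t,\xi_t)-\nabla f(\bx_{t-1},\xi_t)-\nabla F(\bx_t)+\nabla F(\bx_{t-1})) + (1-a_t)\bepsilon_{t-1}$, expand $\norm{\bepsilon_t}^2/\eta_{t-1}$, and use Lemma~\ref{lemma:zeroproducts} to cancel the two cross terms that pair $\bepsilon_{t-1}$ against the two conditionally mean-zero gradient-noise terms. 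That lemma applies verbatim, as its proof only exploits that $\bepsilon_{t-1}$ and $\bx_t$ are independent of $\xi_t$, and the deterministic $\eta_{t-1}$ remains measurable with respect to $\xi_1,\dots,\xi_{t-1}$. Applying $\norm{\bx+\by}^2\le 2\norm{\bx}^2+2\norm{\by}^2$ then leaves exactly three surviving contributions, as in the adaptive proof.

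The single genuinely different step is the bound on the noise contribution $2c^2\eta_{t-1}^3\,\norm{\nabla f(\bx_t,\xi_t)-\nabla F(\bx_t)}^2$. Whereas Lemma~\ref{lemma:epsilonrecursion} retains the data-dependent quantity $G_t^2$ so that the subsequent summation in Theorem~\ref{thm:storm} telescopes against the adaptive denominator $w+\sum_i G_i^2$ via Lemma~\ref{lemma:log}, here the denominator of the step size increments by the deterministic amount $\sigma^2$, so I would instead bound the noise directly by $\sigma^2$. Concretely, because $\eta_{t-1}$ is deterministic I can factor it out and condition on the past:
\[
\E\left[\eta_{t-1}^3\,\norm{\nabla f(\bx_t,\xi_t)-\nabla F(\bx_t)}^2\right]
= \eta_{t-1}^3\,\E\left[\E\left[\norm{\nabla f(\bx_t,\xi_t)-\nabla F(\bx_t)}^2 \mid \xi_1,\dots,\xi_{t-1}\right]\right]
\le \eta_{t-1}^3\sigma^2,
\]
using that $\bx_t$ is a function of $\xi_1,\dots,\xi_{t-1}$ together with the variance assumption $\E[\norm{\nabla f(\bx_t,\xi_t)-\nabla F(\bx_t)}^2\mid \bx_t]\le\sigma^2$. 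This replaces $G_t^2$ by $\sigma^2$ and is precisely where the $G$-Lipschitz requirement is dispensed with.

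The remaining two terms are handled identically to Lemma~\ref{lemma:epsilonrecursion}: I would bound the gradient-difference term by $L$-smoothness, $\norm{\nabla f(\bx_t,\xi_t)-\nabla f(\bx_{t-1},\xi_t)}^2\le L^2\norm{\bx_t-\bx_{t-1}}^2 = L^2\eta_{t-1}^2\norm{\bd_{t-1}}^2$, then write $\bd_{t-1}=\bepsilon_{t-1}+\nabla F(\bx_{t-1})$ and split via $\norm{\bx+\by}^2\le 2\norm{\bx}^2+2\norm{\by}^2$, while the $(1-a_t)^2\norm{\bepsilon_{t-1}}^2/\eta_{t-1}$ term carries through unchanged. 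Collecting the three pieces gives the stated recursion. I do not anticipate any real obstacle here: the deterministic schedule makes the noise bound strictly \emph{easier} than in the adaptive setting, and every other manipulation is a direct transcription of the earlier argument. The only point meriting a moment of care is confirming that the hypotheses of Lemma~\ref{lemma:zeroproducts} still hold, which they do since they concern only the independence of $\xi_t$ from $\bepsilon_{t-1}$ and $\bx_t$.
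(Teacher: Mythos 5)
Your proposal is correct and takes essentially the same route as the paper: the paper's own proof also reuses the argument of Lemma~\ref{lemma:epsilonrecursion} verbatim and changes only the noise term, replacing the bound $\E[\eta_{t-1}^3\|\nabla f(\bx_t,\xi_t)-\nabla F(\bx_t)\|^2]\le \E[\eta_{t-1}^3 G_t^2]$ by the direct variance bound $\eta_{t-1}^3\sigma^2$, exactly as you do by factoring out the now-deterministic $\eta_{t-1}$ and conditioning on $\xi_1,\dots,\xi_{t-1}$. Your observations that Lemma~\ref{lemma:zeroproducts} applies unchanged and that the remaining smoothness and splitting steps carry over are likewise consistent with the paper.
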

\begin{proof}
The proof is nearly identical to that of Lemma \ref{lemma:epsilonrecursion}: the only difference is that instead of using the identity $\E[\eta_{t-1}^3\|\nabla f(x_t,\xi_t) - \nabla F(x_t)\|^2]\le \E[\eta_{t-1}^3\|\nabla f(x_t, \xi_t)\|^2]=\E[\eta_{t-1}^3G_t^2]$, we directly use the value of $\sigma$: $\E[\eta_{t-1}^3\|\nabla f(x_t,\xi_t) - \nabla F(x_t)\|^2]\le \eta_{t-1}^3\sigma^2$.
\end{proof}

Now we can prove Theorem \ref{thm:storm_nolipschitz}:

\begin{proof}[Proof of Theorem \ref{thm:storm_nolipschitz}]
This proof is also nearly identical to the analogous adaptive result of Theorem \ref{thm:storm}.

Again, we consider the potential $\Phi_t = F(\bx_t) + \frac{1}{32L^2 \eta_{t-1}}\|\bepsilon_t\|^2$ and upper bound $\Phi_{t+1} - \Phi_t$ for each $t$.

Since $w\geq (4Lk)^3$, we have $\eta_{t}\le \frac{1}{4L}$. Further, since $a_{t+1}=c\eta_t^2$, we have $a_{t+1}\le \frac{c k}{4 L w^{1/3}}\le 1$ for all $t$.
Then, we first consider $\eta_{t}^{-1}\|\bepsilon_{t+1}\|^2 - \eta_{t-1}^{-1}\|\bepsilon_t\|^2$. Using Lemma~\ref{lemma:epsilonrecursion_nolipschitz}, we obtain
\begin{align*}
\E&\left[\eta_{t}^{-1}\|\bepsilon_{t+1}\|^2 - \eta_{t-1}^{-1}\|\bepsilon_t\|^2\right] \\
&\le \E\left[2 c^2 \eta_{t}^3 \sigma^2 + \frac{(1-a_{t+1})^2 (1+4 L^2 \eta_{t}^2)\|\bepsilon_{t}\|^2}{\eta_{t}}
+4 (1-a_{t+1})^2 L^2 \eta_{t}\|\nabla F(\bx_{t})\|^2 - \frac{\|\bepsilon_t\|^2}{\eta_{t-1}}\right]\\
&\le \E\left[\underbrace{2 c^2 \eta_{t}^3 \sigma^2}_{A_t}+\underbrace{\left(\eta_{t}^{-1}(1-a_{t+1})(1+4 L^2 \eta_{t}^2) - \eta_{t-1}^{-1}\right)\|\bepsilon_t\|^2}_{B_t} + \underbrace{4 L^2 \eta_{t} \|\nabla F(\bx_{t})\|^2}_{C_t}\right]~.
\end{align*}

Let us focus on the terms of this expression individually. For the first term, $A_t$, observe that $w\ge 2\sigma^2$ to obtain:
\[
\sum_{t=1}^T A_t=\sum_{t=1}^T 2 c^2 \eta_{t}^3 \sigma^2
=\sum_{t=1}^T \frac{2 k^3 c^2 \sigma^2}{w+t\sigma^2}
\le \sum_{t=1}^T \frac{2 k^3 c^2}{t+1}
\le 2k^3c^2\ln\left(T+2\right)~.
\]

For the second term $B_t$, we have
\[
B_t \le (\eta_{t}^{-1} - \eta_{t-1}^{-1}  +  \eta_{t}^{-1}(4L^2 \eta_{t}^2 - a_{t+1}) )\|\bepsilon_t\|^2
=\left(\eta_{t}^{-1} - \eta_{t-1}^{-1} + \eta_t(4L^2-c)\right)\|\bepsilon_t\|^2~.
\]
Let us focus on $\frac{1}{\eta_t} - \frac{1}{\eta_{t-1}}$ for a minute. Using the concavity of $x^{1/3}$, we have $(x+y)^{1/3}\le x^{1/3} + yx^{-2/3}/3$. Therefore:
\begin{align*}
\frac{1}{\eta_t} - \frac{1}{\eta_{t-1}}
&= \frac{1}{k}\left[\left(w+t\sigma^2\right)^{1/3} - \left(w+(t-1)\sigma^2\right)^{1/3}\right] 
\le \frac{\sigma^2}{3k(w+(t-1)\sigma^2)^{2/3}}\\
&\le \frac{\sigma^2}{3k(w-\sigma^2+t\sigma^2)^{2/3}}
\le \frac{\sigma^2}{3k(w/2+t\sigma^2)^{2/3}}\\
&\le \frac{2^{2/3}\sigma^2}{3k(w+t\sigma^2)^{2/3}}
\le \frac{2^{2/3}\sigma^2}{3k^3}\eta_t^2
\le \frac{2^{2/3}\sigma^2}{12 L k^3}\eta_t
\le \frac{\sigma^2}{7 L k^3}\eta_t,
\end{align*}
where we have used that that $w\geq (4Lk)^3$ to have $\eta_{t}\le \frac{1}{4L}$.

Further, since $c=28 L^2 + \sigma^2/(7 L k^3)$, we have
\[
\eta_t(4L^2-c) 
\le - 24 L^2 \eta_t  - \sigma^2 \eta_t / (7 L k^3)~.
\]
Thus, we obtain $B_t \le  - 24 L^2 \eta_t\|\bepsilon_t\|^2$. Putting all this together yields:
\begin{align}
\frac{1}{32L^2}&\sum_{t=1}^{T} \left(\frac{\|\bepsilon_{t+1}\|^2}{\eta_{t}} - \frac{\|\bepsilon_t\|^2}{\eta_{t-1}}\right)
\le \frac{k^3 c^2}{16L^2}\ln\left(T+2\right) + \sum_{t=1}^{T}\left[\frac{\eta_{t}}{8}\|\nabla F(\bx_{t})\|^2- \frac{3\eta_t}{4} \|\bepsilon_t\|^2\right]~. \label{eq:storm_1_nolipschitz}
\end{align}

Now, we analyze the potential $\Phi_t$. This analysis is completely identical to that of Theorem \ref{thm:storm}, and is only reproduced here for convenience. Since $\eta_{t}\le \frac{1}{4L}$, we can use Lemma~\ref{lemma:sgd} to obtain
\begin{align*}
\E[\Phi_{t+1}-\Phi_t]
&\le\E\left[- \frac{\eta_t}{4} \|\nabla F(\bx_t)\|^2 + \frac{3\eta_t}{4}\|\bepsilon_t\|^2 + \frac{1}{32L^2 \eta_{t}}\|\bepsilon_{t+1}\|^2 - \frac{1}{32L^2 \eta_{t-1}}\|\bepsilon_t\|^2\right]~.
\end{align*}
Summing over $t$ and using \eqref{eq:storm_1_nolipschitz}, we obtain
\begin{align*}
\E[\Phi_{T+1}-\Phi_1]
&\le \sum_{t=1}^{T}\E\left[- \frac{\eta_t}{4} \|\nabla F(\bx_t)\|^2 + \frac{3\eta_t}{4}\|\bepsilon_t\|^2 + \frac{1}{32L^2 \eta_{t}}\|\bepsilon_{t+1}\|^2 - \frac{1}{32L^2 \eta_{t-1}}\|\bepsilon_t\|^2\right]\\
&\le \E\left[\frac{k^3 c^2}{16L^2}\ln\left(T+2\right)-\sum_{t=1}^{T}\frac{\eta_t}{8}\|\nabla F(\bx_t)\|^2\right]~.
\end{align*}
Reordering the terms, we have
\begin{align*}
\E\left[\sum_{t=1}^{T} \eta_t \|\nabla F(\bx_t)\|^2\right]
&\le \E\left[8(\Phi_1 - \Phi_{T+1}) + \frac{k^3 c^2}{2L^2}\ln\left(T+2\right)\right]\\
&\le 8(F(\bx_1) - F^\star)+ \frac{1}{4 L^2\eta_0}\E[\|\bepsilon_1\|^2] + \frac{k^3 c^2}{2L^2}\ln(T+2)\\
&\le 8(F(\bx_1) - F^\star)+ \frac{ w^{1/3}\sigma^2}{4 L^2k} + \frac{k^3 c^2 }{2L^2}\ln(T+2),
\end{align*}
where the last inequality is given by the definition of $\bd_1$ and $\eta_0$ in the algorithm.

At this point the rest of the proof could proceed in an identical manner to that of Theorem \ref{thm:storm}. However, since $\eta_t$ is now idependent of $\nabla F(x_t)$ by virtue of being deterministic, we can simplify the remainder of the proof somewhat by avoiding the use of Cauchy-Schwarz inequality.

Since $\eta_t$ is deterministic, we have $\E\left[\sum_{t=1}^{T} \eta_t \|\nabla F(\bx_t)\|^2\right]\ge \eta_T\E\left[\sum_{t=1}^{T} \|\nabla F(\bx_t)\|^2\right]$. Then divide by $T \eta_T$ to conclude
\begin{align*}
\frac{1}{T}\E\left[\sum_{t=1}^{T} \|\nabla F(\bx_t)\|^2\right] &\le \frac{M\frac{w^{1/3}}{k}}{T} + \frac{M\frac{w\sigma^{2/3}}{k}}{T^{2/3}},
\end{align*}
where we have used the definition $M=8(F(\bx_1) - F^\star)+ \frac{ w^{1/3}\sigma^2}{4 L^2k} + \frac{k^3 c^2 }{2L^2}\ln(T+2)$ and the identity $(a+b)^{1/3}\le a^{1/3}+b^{1/3}$
\end{proof}

\end{document}